\algrenewcommand\algorithmicindent{0.7em}
\algnewcommand\algorithmicinput{\textbf{Input:}}
\algnewcommand\algorithmicoutput{\textbf{Output:}}
\algnewcommand\Input{\item[\algorithmicinput]}
\algnewcommand\Output{\item[\algorithmicoutput]}
\newtheorem{theorem}{Theorem}
\definecolor{easy_blue}{RGB}{0,98,163}  %
\definecolor{easy_red}{RGB}{150,60,0}
\crefname{equation}{}{}
\Crefname{assumption}{Assumption}{Assumptions}
\newtheorem{assumption}{Assumption}
\newcommand{\CO}{\mathcal{O}}
\newcommand{\E}{\operatorname*{\mathbb{E}}}
\newcommand{\R}{\mathbb{R}}
\newcommand{\T}{^\top}
\NewDocumentCommand{\G}{e{_^}}{%
\mathop{}\!%
\nabla{}
\IfValueT{#1}{_{\!#1}}%
\IfValueT{#2}{^{#2}}%
}
\newcommand{\I}{I}
\newcommand{\ind}{{\perp\!\!\!\perp}}
\newcommand{\blank}{{\mspace{2mu}\cdot\mspace{2mu}}}
\newcommand{\ttimes}{{\mkern-2mu\times\mkern-2mu}}
\title{Non-Denoising Forward-Time Diffusions}
\author{Author \\
Affiliation \\
Address \\
\texttt{email}
}
\newcommand{\Pdata}{\mathcal{P}_\mathcal{D}}
\newcommand{\Pz}{\mathcal{P}_Z}
\newcommand{\sm}{\scriptstyle}
\begin{document}

\maketitle

\begin{abstract}
The scope of this paper is generative modeling through diffusion processes.
An approach falling within this paradigm is the work of \citet{song2021scorebased}, which relies on a time-reversal argument to construct a diffusion process targeting the desired data distribution.
We show that the time-reversal argument, common to all denoising diffusion probabilistic modeling proposals, is not necessary.
We obtain diffusion processes targeting the desired data distribution by taking appropriate mixtures of diffusion bridges.
The resulting transport is exact by construction, allows for greater flexibility in choosing the dynamics of the underlying diffusion, and can be approximated by means of a neural network via novel training objectives.
We develop a unifying view of the drift adjustments corresponding to our and to time-reversal approaches and make use of this representation to inspect the inner workings of diffusion-based generative models.
Finally, we leverage on scalable simulation and inference techniques common in spatial statistics to move beyond fully factorial distributions in the underlying diffusion dynamics.
The methodological advances contained in this work contribute toward establishing a general framework for generative modeling based on diffusion processes.
\end{abstract}

\section{Introduction}

Denoising diffusion probabilistic modeling (DDPM) \citep{sohl-dickstein2015deep,ho2020denoising,song2021scorebased} is a recent generative modeling paradigm exhibiting strong empirical performance.
Consider a dataset of \(N\) samples \(\mathcal{D} = \{x^{(n)}\}_{n=1}^N\) with empirical distribution \(\Pdata\).
The unifying key steps underlying DDPM approaches are:
(i) the definition of a stochastic process with initial distribution \(\Pdata\), whose forward-time (noising) dynamics progressively transform \(\Pdata\) toward a simple data-independent distribution \(\Pz\);
(ii) the derivation of the backward-time (denoising / sampling) dynamics transforming \(\Pz\) toward \(\Pdata\);
(iii) the approximation of the backward-time transitions by means of a neural network.
Following the training step (iii), a sample whose distribution approximates \(\Pdata\) is drawn by (iv) simulating from the approximated backward-time transitions starting with a sample from \(\Pz\).
Both discrete-time \citep{ho2020denoising} and continuous-time \citep{song2021scorebased} formulations of DDPM have been pursued.
This work focuses on the latter case, to which we refer as diffusion time-reversal transport (DTRT).
As in DTRT, dynamics are specified through diffusion processes, i.e.\ solutions to stochastic differential equations (SDE) with associated drift \(f(\blank)\) and diffusion \(g(\blank)\) coefficients.
A number of approximations are involved in the aforementioned steps.
Firstly, as the dynamics are defined on a finite time interval, a dependency from \(\Pdata\) is retained through the noising process.
Hence, starting with a sample from the data-independent distribution \(\Pz\) in (iv) introduces an approximation.
Secondly, while the backward-time dynamics of (ii) are directly available for diffusions, they are approximated by means of a neural network in (iii).
Thirdly, sampling in (iv) is achieved through a discretization on a time-grid, which introduces a discretization error.
\citet[Theorem 1]{debortoli2021diffusion} links these approximations to the total variation distance between the distribution of the generated samples from (iv) and \(\Pdata\).

In our first methodological contribution we develop a procedure for constructing diffusion processes targeting \(\Pdata\) without relying on time-reversal arguments.
The proposed transport (coupling) between \(\Pz\) and \(\Pdata\) is achieved by:
(1) specifying a diffusion process \(X\) on \([0,\tau]\) starting from a generic \(x_0\);
(2) conditioning \(X\) on hitting a generic \(x_\tau\) at time \(\tau\), thus obtaining a diffusion bridge;
(3) taking a bivariate mixture \(\Pi_{0,\tau}\) of diffusion bridges over \((x_0,x_\tau)\) with marginals \(\Pi_0 = \Pz\) and \(\Pi_\tau = \Pdata\), obtaining a mixture process \(M\);
(4) matching the marginal distribution of \(M\) over \([0,\tau]\) with a diffusion process, resulting in a diffusion with initial distribution \(\Pz\) and terminal distribution \(\Pdata\).
The realized diffusion bridge mixture transport (DBMT) between \(\Pz\) and \(\Pdata\) is exact by construction.
We thus sidestep the approximation common to all DDPM approaches due to the dependency from \(\Pdata\) retained through the noising process.
Moreover, the DBMT can be realized for almost arbitrary \(\Pz\), \(f(\blank)\) and \(g(\blank)\).
This increased flexibility is a departure from the DTRT where \(f(\blank)\) and \(g(\blank)\) need to be chosen to obtain convergence toward a simple distribution \(\Pz\).

Similarly to the DTRT, achieving the DBMT requires the computation of a drift adjustment term which depends on \(\mathcal{D}\).
For a SDE class of interest, we develop a unified and interpretable representation of DTRT and DBMT drift adjustments as simple transformations of conditional expectations over \(\mathcal{D}\).
This novel result provides insights on the target mapping that we aim to approximate and on the quality of approximation achieved by the trained score models of \citet{song2021scorebased}.
Having defined for the DBMT a Fisher divergence objective similarly to \citet{song2021scorebased}, we leverage on this unified representation to define two additional training objectives featuring appealing properties.

In our last methodological contribution we extend the class of SDEs that can be realistically employed in computer vision applications.
Specifically, computational considerations have so far restricted the transitions of the stochastic processes employed in DDPM to be fully factorials.
We view images at a given resolution as taking values over a 2D lattice which discretizes the continuous coordinate system \([0,1]^2\) representing heights and widths.
Diffusion processes are viewed as spatio-temporal processes with spatial support \([0,1]^2\).
Doing so, it is possible to leverage on scalable simulation and inference techniques from spatial statistics and consider more realistic diffusion transitions.

This paper is structured as follows.
In \cref{sec:song_dtrt} we review the DTRT of \citet{song2021scorebased} and
in \cref{sec:dbmt} we introduce the DBMT.\@
In order to implement the DTRT and the DBMT it is necessary to specify the underlying SDE, i.e.\ the coefficients \(f(\blank)\) and \(g(\blank)\).
We study a class of interest in \cref{sec:sde_class}.
The unified view of drift adjustments is introduced in \cref{sec:unified_view}.
\cref{sec:transports_approximation} develops the training objectives and \cref{sec:overview_numeric} reviews the obtained results and finalizes the DBMT construction.
In \cref{sec:random_field} we establish the connection with spatio-temporal processes.
We conclude in \cref{sec:conclusions}.
\cref{app:theory,app:sde_class,app:additional_material,app:additional_figures} contain the theoretical framework, assumptions, proofs, and additional material.

\textit{Notation and conventions:}
we use uppercase notation for probability distributions (measures, laws) and lowercase notation for densities;
each probability distribution, and corresponding density, is uniquely identified by its associated letter not by its arguments (which are muted);
for example \(P(dx)\) is a distribution, \(p(x)\) is its corresponding density;
random elements are always uppercase (an exception is made for times, always lowercase for typographical reasons);
if \(P\) is the distribution of a stochastic process, we use subscript notation to refer to its finite dimensional distributions (densities with \(p\)), conditional or not, for some collection of times;
for example \(p_{t'|t}\) denotes a transition density, which is understood to be a function of four arguments \(p_{t'|t}(y|x) = f(t,t',x,y)\);
\(\delta_x\) is the delta distribution at \(x\) and \(\otimes\) is used for product distributions;
we refer directly to a given SDE instead of referring to the diffusion process satisfying such SDE when no ambiguity arises;
we use \([a]_i\) and \([A]_{i,j}\) for vector and matrix indexing, \(A\T\) for matrix transposition.

\section{Diffusion Time-Reversal Transport}\label{sec:song_dtrt}

The starting point of \citet{song2021scorebased} is a diffusion process \(Y\) satisfying a generic \(D\)-dimensional time-inhomogenous SDE with initial distribution \(Y_0 \sim \Pdata\)
\begin{equation}\label{eq:song_sde_noising}
dY_r = f(Y_r,r)dr + g(Y_r,r)dW_r,
\end{equation}
over noising time \(r \in [0,\tau]\).
Thorough this paper we denote with \(Q\) the law of the diffusion solving \cref{eq:song_sde_noising} and with \(q\) the corresponding densities.
Thus, let \(q_{r'|r}(y|x)\), \(0\leq r < r'\leq \tau\), be the transition density of \cref{eq:song_sde_noising}, and let \(q_r(y)\), \(0 < r \leq \tau\), be the marginal density of \cref{eq:song_sde_noising}.
As \(Y_0 \sim \Pdata\), we have
\begin{equation}\label{eq:song_sde_marginal}
q_r(y) = \frac{1}{N}\sum_{n=1}^N q_{r|0}(y|x^{(n)}).
\end{equation}
The dynamics of \cref{eq:song_sde_noising} over the reversed, i.e.\ sampling, time \(t = \tau - r\), \(t\in[0,\tau]\), are given by \citep{anderson1982reversetime,haussmann1986time,millet1989integration}
\begin{equation}\label{eq:song_sde_sampling}
dX_t = \left[-f(X_t,r) + \G \cdot G(X_t,r) + G(X_t,r)\G_{X_t}\ln q_{r}(X_t)\right]dt + g(X_t,r)dW_t,
\end{equation}
where \(r=\tau - t\) is the remaining sampling time, \(G(x,r) = g(x,r) g(x,r)\T\) and the \(D\)-dimensional vector \(\G \cdot G(x,r)\) is defined by \([\G \cdot G(x,r)]_i\ = \sum_{j=1}^D \G_{x_j}[G(x,r)]_{i,j}\).
That is the processes \(X_t\) and \(Y_r=Y_{\tau-t}\) have the same distribution.
Approximating the terminal distribution \(Q_\tau\) of \cref{eq:song_sde_noising}, i.e.\ the initial distribution of \cref{eq:song_sde_sampling}, with \(\Pz\), \(X_0\) is sampled from \(\Pz\) and \cref{eq:song_sde_sampling} is discretized and integrated over \(t\) to produce a sample \(X_\tau\) approximately distributed as \(\Pdata\).

The computation of the multiplicative drift adjustment \(\G_{y}\ln q_{r}(y)\) entering \cref{eq:song_sde_sampling}, i.e.\ the score of the marginal density \cref{eq:song_sde_marginal}, requires in principle \(\CO(N)\) operations.
Let \(s_\phi(y,r)\) be a neural network for which we would like \(s_\phi(y,r) \approx \G_{y}\ln q_r(y)\).
It remains to find a suitable training objective for which unbiased gradients with respect to \(\phi\) can be obtained at \(\CO(1)\) cost with respect to the dataset size \(N\). As \(q_r(y)\) has a mixture representation, the identity of \citet{vincent2011connection} for Fisher divergences provides us with the desired objective for a fixed \(r \in (0,\tau]\)
\begin{align}
\mathbb{L}_{\mathrm{FD,DTRT}}(\phi,r) & = \E_{Y_r \sim Q_r}\Big[\big\lVert\G_{Y_r}\ln q_r(Y_r) - s_\phi(Y_r,r)\big\rVert^2\Big]                                       \notag \\
                                      & =\E_{(Y_0,Y_r) \sim Q_{0,r}}\Big[\big\lVert\G_{Y_r}\ln q_{r|0}(Y_r|Y_0) - s_\phi(Y_r,r)\big\rVert^2\Big].\label{eq:loss_fi_tr_r}
\end{align}
The key point is that an unbiased, \(\CO(1)\) with respect to \(N\), mini-batch Monte Carlo (MC) estimator for the expectation \cref{eq:loss_fi_tr_r} can be trivially obtained by sampling a batch \(Y_0 \sim \Pdata\), \(Y_r \sim Q_{r|0}(dy_r|Y_0)\), and evaluating the average loss over the batch.
In order to achieve a global approximation over the whole time interval \((0,\tau]\), \citet{song2021scorebased} proposes uniform sampling of time \(r\)
\begin{equation}\label{eq:loss_fi_tr}
\mathbb{L}_{\mathrm{FD,DTRT}}(\phi) = \E_{r \sim \mathcal{U}(0,\tau], (Y_0,Y_r) \sim Q_{0,r}}\Big[\mathcal{R}_r\big\lVert\G_{Y_r}\ln q_{r|0}(Y_r|Y_0) - s_\phi(Y_r,r)\big\rVert^2\Big],
\end{equation}
where \(\mathcal{R}_r = {\E[\lVert\G_{Y_r}\ln q_{r|0}(Y_r|Y_0)\rVert^2]}^{-1}\) is a regularization term.
A MC estimator for \cref{eq:loss_fi_tr} is constructed by augmenting the MC estimator for \cref{eq:loss_fi_tr_r} with the additional sampling step \(r \sim \mathcal{U}(0,\tau]\).

\section{Diffusion Bridge Mixture Transport}\label{sec:dbmt}

Our starting point is a generic \(D\)-dimensional time-inhomogenous SDE which, in contrast to \citet{song2021scorebased}, is directly defined on the sampling time \(t \in [0,\tau]\)
\begin{equation}\label{eq:sde}
dX_t = f(X_t,t)dt + g(X_t,t)dW_t.
\end{equation}
We reserve \(P_{\blank|0}(\blank|x_0)\) to denote the law of the diffusion solving \cref{eq:sde} for a given starting value \(x_0\) and \(p_{\blank|0}(\blank|x_0)\) to denote the corresponding densities.

\subsection{Diffusion Bridges}\label{sec:diffusion_bridges}
Diffusion bridges are central to the proposed methodology, in this Section we cover their basic theory.
A diffusion bridge is a diffusion process starting from a given value which is conditioned on hitting a terminal value.
It is a deep result, and consequence of Doob \(h\)-transforms (\citet[Chapter 7.9]{sarkka2019applied}, \citet[Chapter IV.6.39]{rogers2000diffusions}), that a diffusion processes pinned down on both ends is still a diffusion process.
In particular the Markov property is preserved.
More precisely, \cref{eq:sde} with initial value \(x_0\) conditioned on hitting a terminal value \(x_\tau\) at time \(\tau\) is characterized the following SDE on \([0,\tau]\) with initial value \(x_0\) \citep[Theorem 7.11]{sarkka2019applied}
\begin{equation}\label{eq:sde_bridge}
dX_t = \left[f(X_t,t) + G(X_t,t)\G_{X_t}\ln p_{\tau|t}(x_\tau|X_t)\right]dt + g(X_t,t)dW_t,
\end{equation}
where \(G(x,t)=g(x,t)g(x,t)\T\).
The multiplicative adjustment factor \(\G_{x_t}\ln p_{\tau|t}(x_\tau|x_t)\) forces the process to hit \(x_\tau\) at time \(\tau\) and the diffusion process solving \cref{eq:sde_bridge} is known as the diffusion bridge from \((x_0,0)\) to \((x_\tau,\tau)\).
As previously noted, \(p_{\tau|t}(x_\tau|x_t)\) in \cref{eq:sde_bridge} refers to the transition density of \cref{eq:sde}.

\subsection{Diffusion Mixtures}\label{sec:diffusion_mixtures}

The proposed transport construction relies on a representation result for diffusion mixtures.
We present here an informal version and report the precise statement, the required assumptions, and the proof in \cref{app:theory}.

\begin{theorem}[Diffusion mixture representation --- informal]\label{thm:mixture_of_diffusions}
Let \(\{X^\lambda\}\), \(\lambda \in \Lambda\) be a collection of diffusions with associated SDEs \(\{dX_t^\lambda\}\) and marginal densities \(\{\pi_t^\lambda\}\).
Let \(\mathcal{L}\) be a mixing distribution on \(\Lambda\), \(\pi_t\) be the \(\mathcal{L}\)-mixture of \(\{\pi_t^\lambda\}\).
Then there exists a diffusion process \(X\) with marginal \(\pi_t\).
\(X\) follows a SDE whose drift and diffusion coefficients are weighted averages of the corresponding coefficients in \(\{dX_t^\lambda\}\), where the weights are proportional to \(\{\pi_t^\lambda\}\) and to the mixing density.
\end{theorem}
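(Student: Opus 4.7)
The plan is to identify the marginal density of the purported mixture diffusion through Fokker--Planck equations, and then to read off an SDE whose marginal agrees with it. First, I would note that each component $\pi_t^\lambda$ associated with the SDE $dX_t^\lambda = f^\lambda(X_t^\lambda,t)dt + g^\lambda(X_t^\lambda,t)dW_t$ satisfies the forward Kolmogorov equation
\begin{equation*}
\partial_t \pi_t^\lambda(x) = -\G \cdot \bigl(f^\lambda(x,t)\pi_t^\lambda(x)\bigr) + \tfrac{1}{2}\sum_{i,j}\partial_{x_i}\partial_{x_j}\bigl(G^\lambda_{ij}(x,t)\pi_t^\lambda(x)\bigr),
\end{equation*}
with $G^\lambda = g^\lambda (g^\lambda)\T$. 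Because this equation is linear in $\pi_t^\lambda$, integrating both sides against $\mathcal{L}(d\lambda)$ and exchanging spatial and time derivatives with the $\lambda$-integral produces a Fokker--Planck equation of the same form for the mixture density $\pi_t(x) = \int \pi_t^\lambda(x)\,\mathcal{L}(d\lambda)$.

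Next, I would factor $\pi_t$ out of the coefficients of the resulting PDE. Setting
\begin{equation*}
\tilde f(x,t) = \frac{\int f^\lambda(x,t)\,\pi_t^\lambda(x)\,\mathcal{L}(d\lambda)}{\pi_t(x)}, \qquad \tilde G(x,t) = \frac{\int G^\lambda(x,t)\,\pi_t^\lambda(x)\,\mathcal{L}(d\lambda)}{\pi_t(x)},
\end{equation*}
$\pi_t$ solves the Fokker--Planck equation associated with the SDE $dX_t = \tilde f(X_t,t)dt + \tilde g(X_t,t)dW_t$, where $\tilde g$ is any measurable square root of $\tilde G$. The weights $\pi_t^\lambda(x)\,\mathcal{L}(d\lambda)/\pi_t(x)$ form the posterior distribution of $\lambda$ given $X_t = x$, so $\tilde f$ and $\tilde G$ are indeed posterior-weighted averages of the per-component coefficients, as claimed informally. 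Positive semi-definiteness of $\tilde G$ is inherited from that of the $G^\lambda$'s, so such a square root always exists.

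The final step is to upgrade this PDE-level identification to the actual existence of a diffusion process with marginals $\pi_t$. I would invoke a standard SDE existence result, for example the martingale problem approach of Stroock and Varadhan, applied to the SDE with coefficients $(\tilde f, \tilde g)$, and then verify via uniqueness of the Fokker--Planck equation that any such solution has one-dimensional marginals $\pi_t$.

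The main obstacle is collecting the regularity hypotheses that justify each step. The formal statement will need assumptions ensuring: (i) spatial and time derivatives commute with the $\lambda$-integral, so that the per-component Fokker--Planck equations can be combined term-by-term; (ii) strict positivity (or appropriate support conditions) on $\pi_t$ so that the normalization defining $\tilde f$ and $\tilde G$ is well-posed; (iii) enough regularity of $\tilde f$ and $\tilde g$, e.g.\ measurability, local Lipschitzness and linear growth, for a weak solution of the mixture SDE to exist; and (iv) uniqueness at the Fokker--Planck level so that the constructed diffusion indeed has the intended marginal. I expect (iii) to be the most delicate, since the averaged coefficients inherit regularity from the components only in a weighted sense and can potentially degenerate where $\pi_t$ vanishes, which is precisely where the informal statement has to be made precise.
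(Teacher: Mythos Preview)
Your proposal is correct and follows essentially the same route as the paper: write the Fokker--Planck equation for each component, integrate in $\lambda$ (exchanging derivatives and integral), multiply-and-divide by the mixture density to factor the coefficients, and recognize the result as the Fokker--Planck equation of a new SDE. The paper packages the regularity needed for the limit exchanges, positivity, and SDE well-posedness into standing assumptions rather than deriving them, so your step (iii) via Stroock--Varadhan is more explicit than what the paper does.

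One small point worth noting: you average the diffusion \emph{matrices} $G^\lambda$ and then take a square root, whereas the paper's formal statement averages the $\sigma^\lambda$'s directly. Your version is the one that falls out of the Fokker--Planck computation in general, since the second-order term involves $G^\lambda$ and the weighted average of $\sigma^\lambda$'s need not square to the weighted average of $G^\lambda$'s. In the paper's intended application (mixtures of diffusion bridges sharing a common $g$) all $\sigma^\lambda$ coincide, so the distinction disappears there; but your formulation is the correct general one.
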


\cref{thm:mixture_of_diffusions} is first established in \citet[Corollary 1.3]{brigo2002general} limitedly to finite mixtures and 1-dimensional diffusions.
The proof of \cref{thm:mixture_of_diffusions} in \cref{app:theory} is more direct and extends the result to the required multivariate setting.
In \cref{sec:diffusion_bridges} we introduced diffusion bridges mapping arbitrary initial values \(x_0\) to arbitrary final values \(x_\tau\).
Let \(\Pi_{0,\tau}\) denote a generic bivariate distribution on \(\R^D\times\R^D\) with marginals \(\Pi_0,\Pi_\tau\).
We define the diffusion mixture \(M\) as the mixture of diffusion bridges corresponding to \((X_0,X_\tau)\sim\Pi_{0,\tau}\).
That is, we apply \cref{thm:mixture_of_diffusions} to the collection of diffusion bridges \cref{eq:sde_bridge} indexed by their initial and terminal values, \(\lambda = (x_0,x_\tau)\), \(\Lambda = \R^D \times \R^D\), with mixing distribution \(\mathcal{L}(d\lambda)=\Pi_{0,\tau}(dx_0,dx_\tau)\).
By \cref{thm:mixture_of_diffusions} the following SDE on \([0,\tau]\) with initial distribution \(\Pi_0\) has the same marginal distribution as \(M\), in particular its terminal distribution is \(\Pi_\tau\)
\begin{equation}\label{eq:dbm_transport}\begin{aligned}
 & dX_t = \mu(X_t,t)dt + g(X_t,t)dW_t,                                                                                                                                           \\
 & \mu(x_t,t) = f(x_t,t) + G(x_t,t) \underbrace{\int \G_{x_t}\ln p_{\tau|t}(x_\tau|x_t) \frac{p_{t|0,\tau}(x_t|x_0, x_\tau)}{\pi_t(x_t)} \Pi_{0,\tau}(dx_0,dx_\tau)}_{A(x_t,t)}, \\
 & \pi_t(x_t) = \int p_{t|0,\tau}(x_t|x_0, x_\tau) \Pi_{0,\tau}(dx_0,dx_\tau).
\end{aligned}\end{equation}
In \cref{eq:dbm_transport}, \(A(x_t,t)\) gives the multiplicative drift adjustment factor for \cref{eq:sde}.
A case of particular interest occurs when \(\Pi_0\) puts all the mass on a single value \(x_0\).
In the following we refer to \(A(x_t,t,x_0)\) in stance of \(A(x_t,t)\), and to \(\pi_{t|0}(x_t|x_0)\) in stance of \(\pi_t(x_t)\), when it is necessary to distinguish this specific case.
We also extend the scope of \(\Pi\) to indicate the law of \(M\).
Indeed, we already denoted with \(\Pi_{0,\tau}\) its initial-terminal distribution, and with \(\pi_t\) its marginal density.
Accordingly, \(A(x_t,t) = \E_{X_\tau \sim \Pi(dx_\tau|x_t)}[\G_{x_t}\ln p_{\tau|t}(X_\tau|x_t)]\).
The transport from \(\Pz\) to \(\Pdata\) is then achieved by \(\Pi_\tau = \Pdata\) and \(\Pz = \Pi_0\) (\(\Pz\) can be arbitrarily defined).
As \(\Pdata\) is an empirical distribution, the integral in \cref{eq:dbm_transport} with respect to \(x_\tau\) reduces to averages over \(\mathcal{D}\).
In summary, the diffusion \(X\) solution of \cref{eq:dbm_transport} realizes the proposed transport from \(\Pz\) to \(\Pdata\) by matching the marginal distribution of \(M\).

\section{SDE Class}\label{sec:sde_class}

The starting point of the proposed transport is the unconstrained SDE \cref{eq:sde}.
In this Section we define SDEs which are realized through a time-change of simpler SDEs and which are general enough to subsume the SDEs introduced in \citet{song2021scorebased}.
Consider the \(D\)-dimensional SDEs
\begin{align}
 & dZ_t = \Gamma^{1/2} dW_t,\label{eq:sde_base_bm}                   \\
 & dZ_t = \alpha_t Z_t dt + \Gamma^{1/2} dW_t,\label{eq:sde_base_ou}
\end{align}
where \(\alpha_t \neq 0\) is a scalar function and \(G(X_t,t) = \Gamma\) introduces an arbitrary covariance structure.
\cref{eq:sde_base_bm} is the SDE of a correlated and scaled Brownian motion and \cref{eq:sde_base_ou} is the SDE of an Ornstein-Uhlenbeck process driven by a correlated and scaled Brownian motion.
The transition densities of \cref{eq:sde_base_bm,eq:sde_base_ou} are Gaussian (\cref{app:sde_class}).
We denote both with \(\widetilde{p}_{t'|t}\), informally \cref{eq:sde_base_bm} is a special case of \cref{eq:sde_base_ou} with \(\alpha_t = 0\).
SDE \cref{eq:sde_base_ou} in the time-homogenous case \(\alpha_t=-\nicefrac{1}{2}\) has stationary distribution \(\mathcal{N}_D(0,\Gamma)\).
We now introduce the time-change.
Let \(\beta_t > 0\) be a continuous function on \([0,\tau]\).
Then \(b_t=\int_0^t \beta_u du\) defines a monotonically (strictly) increasing function \(b_t:[0,\tau]\rightarrow [0,b_\tau]\).
The following SDEs on \([0,\tau]\) represent the class of dynamics for \cref{eq:song_sde_noising,eq:sde} on which we focus on the rest of this paper
\begin{align}
 & dX_t = \sqrt{\beta_t}\Gamma^{1/2} dW_t,\label{eq:sde_time_scaled_bm}                           \\
 & dX_t = \alpha_t \beta_t X_t dt + \sqrt{\beta_t}\Gamma^{1/2} dW_t,\label{eq:sde_time_scaled_ou}
\end{align}
and \(G(x,t) = \beta_t \Gamma\).
The standard time-change result for diffusions \citep[Theorem 8.5.1]{oksendal2003stochastic} establishes that the processes \(X_t\) respectively from \cref{eq:sde_time_scaled_bm,eq:sde_time_scaled_ou} are equivalent in law to their time-scaled counterparts \(Z_{b_t}\) from \cref{eq:sde_base_bm,eq:sde_base_ou}.
That is, SDEs \cref{eq:sde_time_scaled_bm,eq:sde_time_scaled_ou} correspond to the evolution of the simpler SDEs \cref{eq:sde_base_bm,eq:sde_base_ou} under a non-linear time wrapping where time flows with instantaneous intensity \(\beta_t\).
For both \cref{eq:sde_time_scaled_bm,eq:sde_time_scaled_ou} the time-change argument yields \(p_{\tau|t}(y|x)=\widetilde{p}_{b_\tau|b_t}(y|x)\) for the transition density of \cref{eq:sde}, and equivalently for the transition density \(q_{\tau|t}\) of \cref{eq:song_sde_noising}.
We thus obtain
\begin{equation}\label{eq:sde_td}
p_{\tau|t}(x_\tau|x_t)=\mathcal{N}_D(x_\tau;\,x_t a(t,\tau),\,\Gamma v(t,\tau))
\end{equation}
for appropriate scalar functions \(a(t,\tau),v(t,\tau)\) with \(v(t,\tau)>0\) (\cref{app:sde_class}).
By direct computation
\begin{align}
 & \G_{x_t}\ln p_{\tau|t}(x_\tau|x_t) = \Gamma^{-1} \bigg(\frac{x_\tau}{a(t,\tau)} - x_t\bigg)\frac{a^2(t,\tau)}{v(t,\tau)},\label{eq:sde_td_x_score} \\
 & \G_{x_\tau}\ln p_{\tau|t}(x_\tau|x_t) = \Gamma^{-1} \bigg(x_t a(t,\tau) - x_\tau \bigg)\frac{1}{v(t,\tau)}.\label{eq:sde_td_y_score}
\end{align}
From Bayes theorem and the Markov property we have
\begin{equation}\label{eq:bridge_td}
p_{t|0,\tau}(x_t|x_0,x_\tau) = \mathcal{N}_D\left(x_t;\,x_0 \underline{a}_{\mathrm{br}}(0,t,\tau) + x_\tau \overline{a}_{\mathrm{br}}(0,t,\tau),\,\Gamma v_{\mathrm{br}}(0,t,\tau)\right),
\end{equation}
where once again \(\underline{a}_{\mathrm{br}}(0,t,\tau)\), \(\overline{a}_{\mathrm{br}}(0,t,\tau)\) and \(v_{\mathrm{br}}(0,t,\tau) > 0\) are scalar functions given in \cref{app:sde_class}.
Finally, by direct computation
\begin{equation}\label{eq:bridge_td_score}
\G_{x_t} \ln p_{t|0,\tau}(x_t|x_0,x_\tau) = \Gamma^{-1} \frac{x_0 \underline{a}_{\mathrm{br}}(0,t,\tau) + x_\tau \overline{a}_{\mathrm{br}}(0,t,\tau) - x_t}{v_{\mathrm{br}}(0,t,\tau)}.
\end{equation}
These results provide all the analytical formulas required for the computation of the adjustment factors \(A(x_t,t)\), \(A(x_t,t,x_0)\) and of the training objectives used to approximate them (\cref{sec:transports_approximation}).

\subsection{Interpretation of Denoising Time-Reversed SDEs}\label{sec:song_sde_interpretation}

\citet{song2021scorebased} introduces two specifications of \cref{eq:song_sde_noising}, named VESDE and VPSDE, which are respectively given by
\begin{align}
 & dY_r = \sqrt{\beta_{\mathrm{ve},r}}dW_r,\label{eq:sde_ve}                                             \\
 & dY_r = -\frac{1}{2} \beta_{\mathrm{vp},r} Y_r dr + \sqrt{\beta_{\mathrm{vp},r}}dW_r.\label{eq:sde_vp}
\end{align}
See \cref{app:sde_class} for the functional form of \(\beta_{\mathrm{ve},r}\) and \(\beta_{\mathrm{vp},r}\).
We thus recover \cref{eq:sde_ve,eq:sde_vp} from \cref{eq:sde_time_scaled_bm,eq:sde_time_scaled_ou} with \(\Gamma=\I\) and \(\alpha_t=-\nicefrac{1}{2}\).
That is, VESDE and VPSDE correspond to a time change of the much simpler SDEs for the standard Brownian motion and for the standard Langevin SDE
\begin{equation*}\begin{aligned}
 & dZ_r = dW_r,                      \\
 & dZ_r = -\frac{1}{2}Z_r dr + dW_r.
\end{aligned}\end{equation*}

\section{Unified View of Drift Adjustments}\label{sec:unified_view}

The linearity of SDEs \cref{eq:sde_time_scaled_bm,eq:sde_time_scaled_ou}, underlying our and \citet{song2021scorebased} works, has the important consequence that \cref{eq:sde_td_x_score,eq:sde_td_y_score} are linear in \(x_t\).
This in turn allow us to derive an alternative representation for the drift adjustment in \cref{eq:dbm_transport}.
Indeed, substituting \cref{eq:sde_td_x_score} in \cref{eq:dbm_transport} gives (\cref{app:theory})
\begin{equation}\label{eq:dbmt_as_E}
G(x,t)A(x,t) = \beta_t\left(\frac{1}{a(t,\tau)} \E_{X_\tau\sim\Pi_{\tau|t}(dx_\tau|x)}[X_\tau] - x\right)\frac{a^2(t,\tau)}{v(t,\tau)}.
\end{equation}
Similarly, for the time-reversal drift adjustment term in \cref{eq:song_sde_sampling} we have (\cref{app:theory})
\begin{equation}\label{eq:dtrt_as_E}
G(x,r)\G_{x}\ln q_{r}(x) = \beta_{r}\left(a(0,r)\E_{X_\tau \sim Q_{0|r}(dx_\tau|x)}[X_\tau] -x\right)\frac{1}{v(0,r)}.
\end{equation}
The relations \cref{eq:dbmt_as_E,eq:dtrt_as_E} provide a unified view of the inner workings of the DTRT and of the DBMT targeting \(\Pdata\).
In the following we always refer to sampling time \(t\).
Remember that \(r = \tau - t\) is the remaining sampling time.
For ease of exposition we assume \(\beta_t=1\), as shown in \cref{sec:sde_class} the term \(\beta_t\) corresponds to a time-warping.
The terms \(a(t,\tau)\), \(a(0,r)\) are ``integrated scalings''.
They are equal to 1 for \cref{eq:sde_time_scaled_bm} and the same holds for \cref{eq:sde_time_scaled_ou} as \(r \rightarrow 0\).
The terms \(v(t,\tau)\), \(v(0,r)\) are ``integrated variances''.
They are equal to \(r\) for \cref{eq:sde_time_scaled_bm} and the same holds for \cref{eq:sde_time_scaled_ou} as \(r \rightarrow 0\).
We commonly refer to \(E[X_\tau|x,t]\) for expectation terms in \cref{eq:dbmt_as_E,eq:dtrt_as_E}.
Both drift adjustments \cref{eq:dbmt_as_E,eq:dtrt_as_E} are thus essentially of the form \((E[X_\tau|x,t] - x)v_{r}^{-1}\) where the term \(v_{r}^{-1}\) diverges as \(r \rightarrow 0\).

The expectations \(E[X_\tau|x,t]\) are convex linear combinations of the samples \(x^{(n)}\) from \(\mathcal{D}\).
Explicitly, \(E[X_\tau|x,t]=\sum_{n=1}^N \omega(x,t)^{(n)} x^{(n)}\), where the weights \(\omega(x,t)^{(n)}\) are the probabilities, under the distributions \(Q\) (time-reversal sampling process \cref{eq:song_sde_sampling}) and \(\Pi\) (mixture of diffusions process \(M\) from \cref{sec:diffusion_mixtures}), of reaching each state \(x^{(n)}\) at terminal time \(\tau\) from \(x\) at time \(t\).
By construction, the initial weights entering expectation \cref{eq:dbmt_as_E} are all equal to \(\nicefrac{1}{N}\) when \(X\) starts from a fixed value \(x_0\), and are so on average when \(X_0\) is stochastic.
The initial weights entering expectation \cref{eq:dtrt_as_E} are on average approximately equal to \(\nicefrac{1}{N}\), depending on the quality of the approximation \(\Pz \approx Q_\tau\).
Thus, \(E[X_\tau|X_0,0]\) is an averaging of many samples \(x^{(n)}\).
As time progresses, changes in \(X_t\) correspond to changes in \(E[X_\tau|X_t,t]\) through changes in the weights \(\omega(x,t)^{(n)}\).
Eventually all mass concentrates on a single weight \(\omega(x,t)^{(*)}\) corresponding to a dataset sample \(x^{(*)}\).
Ultimately, the attractor dynamics implied by \((E[X_\tau|x,t] - x)v_{r}^{-1}\) drive \(X_t\) to \(x^{(*)}\).
We provide an inspection in \cref{fig:vpsde_summary}, where \(\mathcal{D}(\textsc{cifar})\) stands for the training portion of the CIFAR10 dataset, and Euler(T) corresponds to the Euler scheme \citep{kloeden1992numerical} applied with T discretization steps.

In the VESDE and VPSDE of \citet{song2021scorebased} we have \(G(x,r)=\beta_r\I\) and reversing \cref{eq:dtrt_as_E} gives
\begin{equation}\label{eq:song_E}
\E_{X_\tau \sim Q_{0|r}(dx_\tau|x)}[X_\tau] = \frac{v(0,r)\G_{x}\ln q_{r}(x) + x}{a(0,r)},
\end{equation}
where \(\G_{y}\ln q_{r}(y)\) is the \textit{true score}.
We can thus take a \textit{trained score model} \(s_\phi(x,r) \approx \G_{x}\ln q_r(y)\), plug it in \cref{eq:song_E}, and verity the extent to which \(E[X_\tau|x,t]\) has been approximated, see \cref{fig:vpsde_summary}.

\begin{figure}[h]
\centering
\includegraphics[width=0.8\textwidth]{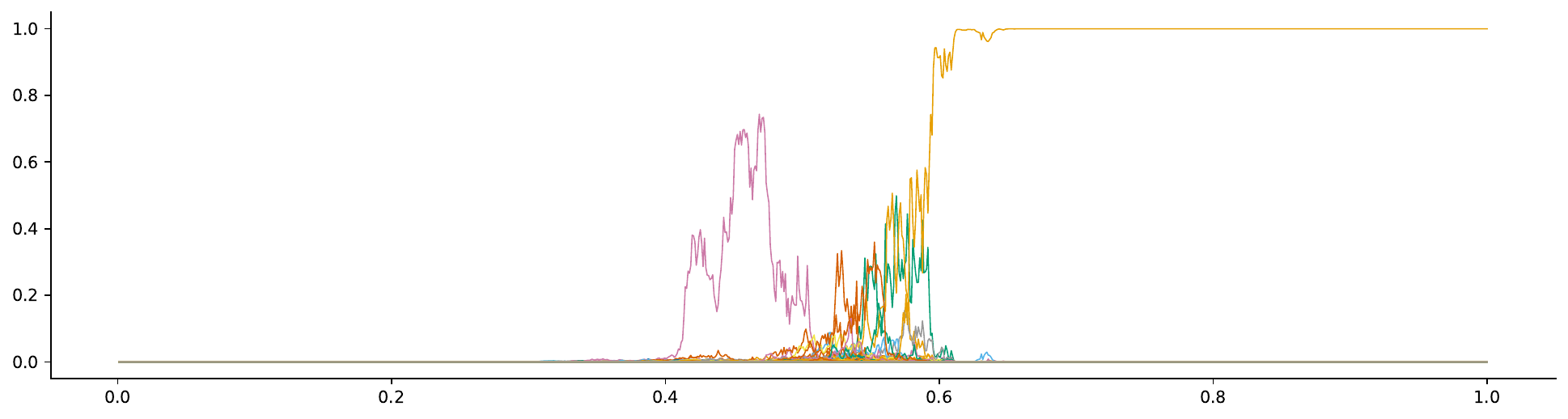}
\hspace*{-0.35cm}\includegraphics[width=0.75\textwidth]{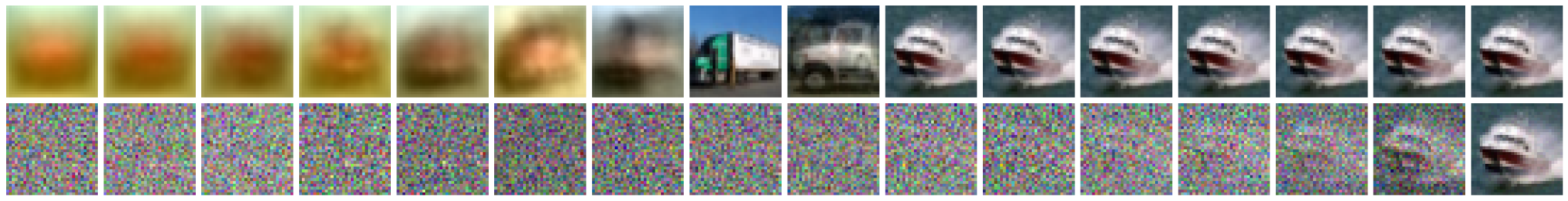}
\hspace*{-0.35cm}\includegraphics[width=0.75\textwidth]{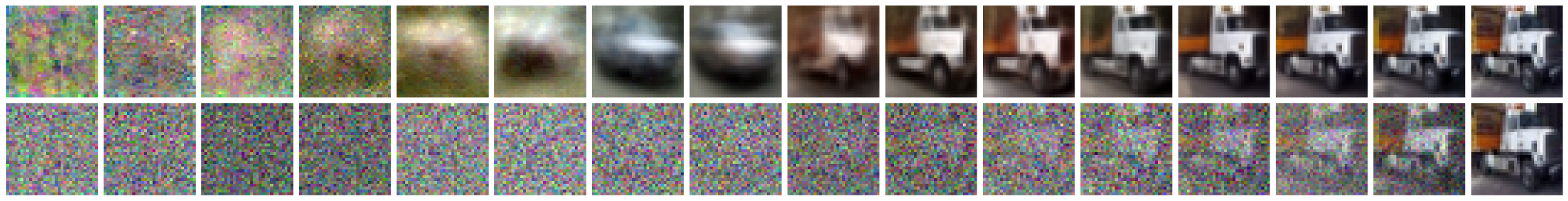}
\hspace*{-0.35cm}\includegraphics[width=0.75\textwidth]{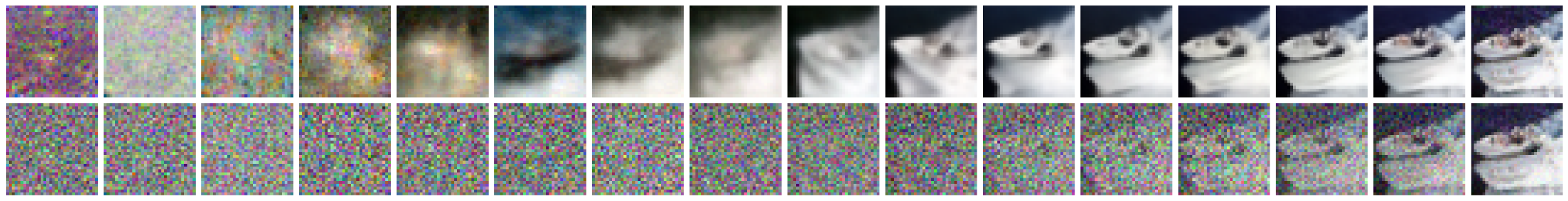}
\caption{VPSDE model --- \nth{2} cells' row: evolution of a trajectory of \(X\) over sampling time (its terminal value \(X_\tau\) is the generated sample) via the Euler(1000) discretization of \cref{eq:song_sde_sampling} using the \textit{true score} \(\G_{y}\ln q_{r}(y)\) for \(\mathcal{D}(\textsc{cifar})\); line-plot: weights' evolution \(\omega(X_t,t)^{(n)}\) for all \(x^{(n)}\) in \(\mathcal{D}(\textsc{cifar})\) for the same \(X\) (cyclical palette, many weights cannot be distinguished as they remain close to 0); \nth{1} cells' row: \(E[X_\tau|X_t,t]\) evolution for the same \(X\); \nth{3} and \nth{4} cells' rows: same as \nth{1} and \nth{2} cells' rows for another trajectory \(X\), using the \textit{trained score model}; \nth{5} and \nth{6} cells' rows: same as \nth{3} and \nth{4} cells' rows for another trajectory \(X\), using Euler(100).}\label{fig:vpsde_summary}
\end{figure}

We pause for a moment to review the findings of \cref{fig:vpsde_summary} (see \cref{app:additional_figures} for additional related plots).
Firstly, we can classify the dynamics of \(E[X_\tau|X_t,t]\) and of the associated weights in three stages.
In the \nth{1} stage the weights do not move much.
During the \nth{2} stage, roughly \(t\in[0.4,0.6]\), the weights' mass gets distributed over a limited number of samples.
Interestingly, the weights' dynamics are not monotonic.
As time progresses the weights' mass shifts between different objects from different classes.
From the beginning of the \nth{3} stage all mass gets allocated to a single weight, the terminal image is decided well in advance of the terminal time.
These dynamics are suboptimal.
We would like to shorten the \nth{1} stage, but it is associated with large values of \(\beta_t\) (i.e.\ quick time passing) which are required to decouple \(Q_\tau\) from \(\Pdata\).
This is an intrinsic limitation of time-reversal approaches.
It is also dubious that (partially) sampling multiple objects over \(t\) is beneficial for efficient generative modeling when we make use only of the terminal sample.
This issue applies to trained models as well, as the \nth{3} row of \cref{fig:vpsde_summary} shows.
An interesting open question is how to obtain more suitable dynamics, where perhaps class transitions happen rarely.
Secondly, \(E[X_\tau|X_t,t]\) provides a denoised representation of \(X_t\) across the whole \nth{3} stage.
An alternative to the noise removal step applied to \(X_\tau\) in \citet{song2021scorebased} is to consider \(E[X_\tau|X_t,t]\) as the sampling process instead.
Thirdly, \cref{fig:vpsde_summary} makes it clear that lowering the number of discretization steps affects generative sampling in multiple ways.
On the one hand the terminal sample \(X_\tau\) is more noisy.
This is not very surprising: close to \(\tau\) the drift adjustment is approximately \((x^{(*)} - X_t)v_r^{-1}\), which is the drift of a Brownian bridge.
Bridge sampling is notoriously problematic \citep{bladt2016simulation}.
On the other hand larger discretization errors also significantly affect the dynamics of \(E[X_\tau|X_t,t]\) resulting in less coherent samples.
We remark that none of these insights could have been gained by observing \(X_t\) alone, i.e.\ the even cells' rows of \cref{fig:vpsde_summary}.
To conclude, \cref{eq:dbmt_as_E,eq:dtrt_as_E} give an additional meaning to ``denoising''.
Neural network approximators need to map from a noisy input \(X_t\) to an adjustment toward a smoother superimposition of samples.
The desire to minimize the discrepancy between the smoothness properties of \(X_t\) and that of \(\E[X_\tau|X_t,t]\) motivates the developments of \cref{sec:random_field}.

\section{Transports Approximation}\label{sec:transports_approximation}

As in \citet{song2021scorebased}, computing the multiplicative drift adjustment \(A(x_t,t)\) requires \(\CO(N)\) operations.
In this Section we introduce three training objectives for which unbiased and scalable, i.e.\ \(\CO(1)\) with respect to \(N\), MC estimators can be immediately derived.

The first training objective applies only to \(A(x_t,t,x_0)\). It relies on the identity (\cref{app:theory})
\begin{equation}\label{eq:transport_x0_scalable}
A(x_t,t,x_0) = \G_{x_t}\ln \pi_{t|0}(x_t|x_0) - \G_{x_t}\ln p_{t|0}(x_t|x_0).
\end{equation}
It is advantageous to consider the right-hand side of \cref{eq:transport_x0_scalable} because from \cref{eq:dbm_transport} we know that \(\pi_{t|0}(x_t|x_0)\) has mixture representation.
As in \citet{song2021scorebased}, we can rely on \citet{vincent2011connection} to obtain a scalable objective to train a neural network approximator \(s_\phi(x_t,t) \approx \G_{x_t}\ln \pi_{t|0}(x_t|x_0)\), i.e.
\begin{align}
\mathbb{L}_{\mathrm{FD,DBMT}}(\phi) & = \E_{t \sim \mathcal{U}(0,\tau), X_t \sim \Pi_{t|0}}\Big[\mathcal{J}_t \big\lVert\G_{X_t}\ln \pi_{t|0}(X_t|x_0) - s_\phi(X_t,t)\big\rVert^2\Big]\notag                                          \\
                                    & =\E_{t \sim \mathcal{U}(0,\tau), (X_t, X_\tau) \sim \Pi_{t,\tau|0}}\Big[\mathcal{J}_t \big\lVert\G_{X_t}\ln p_{t|0,\tau}(X_t|x_0,X_\tau) - s_\phi(X_t,t)\big\rVert^2\Big],\label{eq:loss_fi_dbm}
\end{align}
where \(\mathcal{J}_t = {\E[\lVert\G_{X_t}\ln p_{t|0,\tau}(X_t|x_0,X_\tau)\rVert^2]}^{-1}\) is a regularization term.

The remaining training objectives rely on the identities \cref{eq:dbmt_as_E,eq:dtrt_as_E}.
The goal is directly approximate the expectations of \cref{eq:dbmt_as_E,eq:dtrt_as_E} which, as in \cref{sec:unified_view}, we denote with a generic \(\E[X_\tau|x,t]\).
That is, we aim to train a neural network approximator \(s_\phi(x,t) \approx \E[X_\tau|x,t]\).
As conditional expectations are mean squared error minimizers, suitable objectives for the expectation terms of \cref{eq:dbmt_as_E,eq:dtrt_as_E} are
\begin{align}
 & \mathbb{L}_{\mathrm{CE,DBMT}}(\phi) = \E_{t \sim \mathcal{U}[0,\tau), (X_t, X_\tau) \sim \Pi_{t,\tau}}\Big[\big\lVert X_\tau - s_\phi(X_t,t)\big\rVert^2\Big],\label{eq:loss_E_dbm} \\
 & \mathbb{L}_{\mathrm{CE,DTRT}}(\phi) = \E_{r \sim \mathcal{U}[0,\tau), (Y_0,Y_r) \sim Q_{0,r}}\Big[\big\lVert Y_0 - s_\phi(Y_r,r)\big\rVert^2\Big].\label{eq:loss_E_tr}
\end{align}

In \cref{tab:mc_estimators} we summarize the operations needed to implement the plain MC estimators for the four objectives considered in this work.
We reference where to find the required quantities for SDEs \cref{eq:sde_time_scaled_bm,eq:sde_time_scaled_ou}.
The MC estimators for the Fisher divergence losses \(\mathbb{L}_{\mathrm{FD,*}}\) involve multiplications by \(\Gamma^{-1}\) (by \cref{eq:sde_td_y_score,eq:bridge_td_score}).
Moreover, computing the drift adjustment at generation time requires multiplications by \(\Gamma\).
In \cref{sec:random_field} we discuss how to manage the computational burden.
An appealing property of \(\mathbb{L}_{\mathrm{CE,*}}\) is that computing the drift adjustment only requires the application of simple scalar functions (see \cref{eq:dbmt_as_E,eq:dtrt_as_E}), and that their MC estimators only requires sampling operations.
A further advantage of \(\mathbb{L}_{\mathrm{CE,*}}\) is that no regularization is required.
In contrast, in the absence of regularization terms, \(\mathbb{L}_{\mathrm{FD,*}}\) are divergent for \(t \approx \tau\) due to the term \(v_{r}^{-1}\) (\cref{sec:unified_view}).

\begin{table}[h]
\centering
\begin{tabular}{lll}
\toprule
\(\mathbb{L}\)                    & Sampling (\(\sm r \sim \mathcal{U}(0,\tau], t \sim \mathcal{U}[0,\tau)\))                                                                 & Evaluation                                                                \\
\midrule
\(\mathbb{L}_{\mathrm{FD,DTRT}}\) & \(\sm Y_0 \sim \Pdata\), \(\sm Y_r \sim Q_{r|0}(dy_r|Y_0)\)\cref{eq:sde_td}                                                               & \(\sm \G_{Y_r}\ln q_{r|0}(Y_r|Y_0)\)\cref{eq:sde_td_y_score}              \\
\(\mathbb{L}_{\mathrm{FD,DBMT}}\) & \(\sm X_\tau \sim \Pdata\), \(\sm(X_0 = x_0)\), \(\sm X_t \sim P_{t|0,\tau}(dx_t|x_0,X_\tau)\)\cref{eq:bridge_td}                         & \(\sm \G_{X_t}\ln p_{t|0,\tau}(X_t|X_0,X_\tau)\)\cref{eq:bridge_td_score} \\
\(\mathbb{L}_{\mathrm{CE,DTRT}}\) & \(\sm Y_0 \sim \Pdata\), \(\sm Y_r \sim Q_{r|0}(dy_r|Y_0)\)\cref{eq:sde_td}                                                               &                                                                           \\
\(\mathbb{L}_{\mathrm{CE,DBMT}}\) & \(\sm X_\tau \sim \Pdata\), \(\sm X_0 \sim \Pi_{0|\tau}(dx_0|X_\tau)\), \(\sm X_t \sim P_{t|0,\tau}(dx_t|X_0,X_\tau)\)\cref{eq:bridge_td} &                                                                           \\
\bottomrule
\end{tabular}
\caption{Sampling and evaluation operations required to implement the proposed MC estimators.}\label{tab:mc_estimators}
\end{table}

\section{DBMT Overview and Numerical Experiment}\label{sec:overview_numeric}

In this section we finalize the DBMT construction, putting together the results of \cref{sec:dbmt,sec:sde_class,sec:transports_approximation}.
The unconstrained SDE follows \cref{eq:sde_time_scaled_bm} or \cref{eq:sde_time_scaled_ou}.
It remains to choose the mixing distribution \(\Pi_{0,\tau}\).
The marginal \(\Pi_\tau\) needs to match \(\Pdata\), but there is flexibility in the choice of \(\Pi_{0|\tau}\).
\citet{song2021scorebased} derived a random ordinary differential equation (RODE) matching the marginal distribution of a generative SDE, leading to faster sampling and to likelihood evaluation.
RODE-matching requires \(\Pi_0\) to have density.
A natural implementation is given by the factorial distribution \(\Pi_{0,\tau} = \Pz \otimes \Pdata\) with \(\Pz = \mathcal{N}_D(0, \Gamma)\) and the unconstrained SDE following \cref{eq:sde_time_scaled_ou} with \(\alpha_t = \nicefrac{1}{2}\) which preserves \(\Pz\).
If instead the DBMT starts from a fixed value \(x_0\), i.e.\ \(\Pi_{0,\tau} = \delta_{x_0} \otimes \Pdata\), we can choose \(x_0 = \nicefrac{1}{N}\sum_{n=1}^N x^{(n)}a(0,\tau)^{-1}\) to remove the drift adjustment at \(t=0\) (see \cref{eq:dbmt_as_E}) and reduce the work required to transport \(x_0\) to \(\Pdata\).
Finally, the use of non-factorial distributions can lead to a more efficient implementation, by linking the initial distribution to \(\Pdata\).

The training steps for the simplest objective \cref{eq:loss_E_dbm} of \cref{sec:transports_approximation} are reported in \cref{alg:training_ce}.
Batch size is assumed to be 1 to ease the description.
It is also assumed that \(\Pi_{0,\tau}\) is factorial, otherwise the obvious modification applies to line 2 (\cref{alg:sampling_ce} is unaffected) where the endpoints are sampled.
At line 3 a random central time and the corresponding state are sampled.
The function \(\texttt{optimizationstep}\) implements a step of stochastic gradient descent update based on the loss \(\mathcal{L}\).
The corresponding sampling algorithm is reported in \cref{alg:sampling_ce} where the Euler(T) discretization is assumed in line 6.
\(P_{t|0,\tau}(dx_t|X_0,X_\tau)\), \(a(t,\tau)\), \(v(t,\tau)\), \(\beta_t\) are defined in \cref{sec:sde_class}.
\cref{sec:random_field} shows how to sample efficiently from \(P_{t|0,\tau}(dx_t|X_0,X_\tau)\) and \(\mathcal{N}_D(0, \Gamma)\) in computer vision applications.

\vspace*{-1.5em}
\begin{minipage}[t]{.49\textwidth}
\begin{algorithm}[H]
\caption{DBMT training (\(\mathbb{L}_{\mathrm{CE,DBMT}}\))}\label{alg:training_ce}
\begin{algorithmic}[1]
\Input{\(\Pdata\), \(\Pz\), SDE \cref{eq:sde_time_scaled_bm} or \cref{eq:sde_time_scaled_ou}, NN \(s_\phi(x,t)\)}
\Output{trained \(s_\phi(x,t)\)}
\Repeat{}
\State{\(X_\tau \sim \Pdata\),\ \ \(X_0 \sim \Pz\)}
\State{\(t \sim \mathcal{U}[0,\tau)\),\ \ \(X_t \sim P_{t|0,\tau}(dx_t|X_0,X_\tau)\)}
\State{\(\mathcal{L} \gets \big\lVert X_\tau - s_\phi(X_t,t)\big\rVert^2\)}
\State{\(\phi \gets \texttt{optimizationstep}(\phi,\mathcal{L})\)}
\Until{convergence}
\end{algorithmic}
\end{algorithm}
\end{minipage}
\hfill
\begin{minipage}[t]{.49\textwidth}
\begin{algorithm}[H]
\caption{DBMT sampling (\(\mathbb{L}_{\mathrm{CE,DBMT}}\))}\label{alg:sampling_ce}
\begin{algorithmic}[1]
\Input{\(\Pz\), SDE \cref{eq:sde_time_scaled_bm} or \cref{eq:sde_time_scaled_ou}, trained \(s_\phi(x,t)\)}
\Output{Discretized path \(X_{0:T}\)}
\State{\(X_0 \sim \Pz\)}
\For{\(s=1,\dots,T\)}
\State{\(t \gets (s-1)\frac{\tau}{T},\ \ x \gets X_{s-1}\)}
\State{\(u_s \gets \beta_t\left(\frac{1}{a(t,\tau)} s_\phi(x,t) - x\right)\frac{a^2(t,\tau)}{v(t,\tau)}\)}
\State{\(\mathcal{E}_s \sim \mathcal{N}_D(0, \Gamma)\)}
\State{\(X_s \gets x + (f(x,t) + u_s) \frac{\tau}{T} + g(x,t) \sqrt{\frac{\tau}{T}}\mathcal{E}_s\)}
\EndFor{}
\end{algorithmic}
\end{algorithm}
\end{minipage}
\vspace*{0.5em}

We consider a toy numerical example with \(\Pz = \Pdata = \nicefrac{1}{3}(\delta_{-2} + \delta_{0} + \delta_{2})\), \(D=\tau=1\).
The unconstrained SDE follows the standard Brownian motion.
We consider two mixing distributions: independent mixing \(\Pi^\ind_{0,1}\) where \(X_0\) and \(X_1\) are independent and fully dependent mixing \(\Pi^=_{0,1}\) where \(X_0=X_1\).
The results are reported in \cref{fig:toy_num}.
For both couplings the correct terminal distribution \(\Pdata\) is recovered, as can be seen by taking the row-wise sum of the transition matrices.
The initial-terminal distribution of \(X\) solving \cref{eq:dbm_transport}, which realizes the DBMT, is different from the corresponding mixing distribution \(\Pi_{0,1}\) which is realized by the mixture process \(M\).
\(\Pi^\ind_{0,1}\) results in a transition matrix of equal entries \(\nicefrac{1}{9}\), \(\Pi^\ind_{0,1}\) results in a diagonal transition matrix of equal diagonal entries \(\nicefrac{1}{3}\).

\begin{figure}[h]
\centering
\includegraphics[width=1\textwidth]{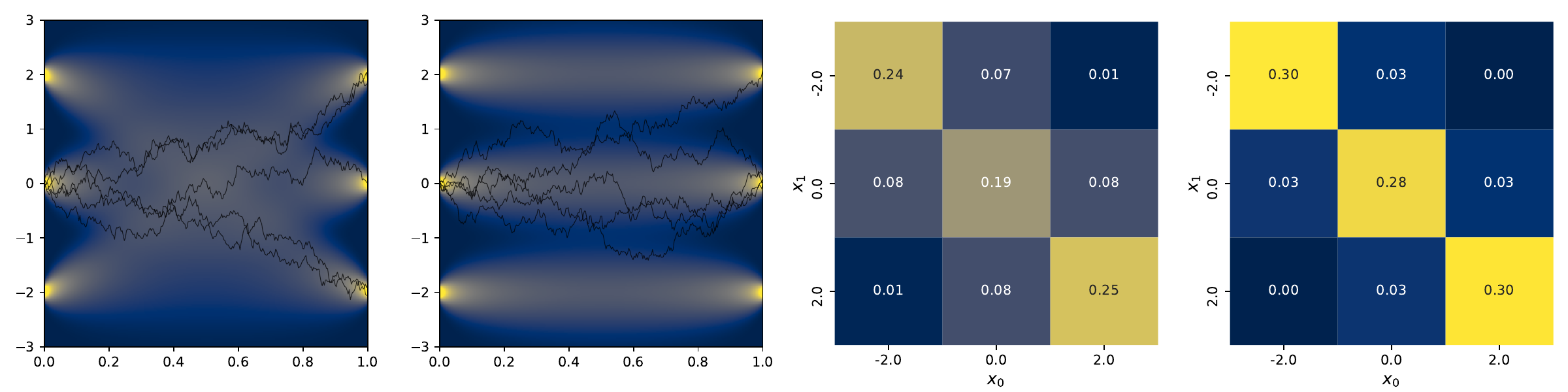}
\caption{(\nth{1} (\(\Pi^\ind_{0,1}\)), \nth{2} (\(\Pi^=_{0,1}\)) plots): marginal density of the diffusion mixture \(M\) in yellow, which matches the marginal density of \(X\) solving \cref{eq:dbm_transport}, 5 sample paths of \(X\) started at \(0\) in black; (\nth{3} (\(\Pi^\ind_{0,1}\)), \nth{4} (\(\Pi^=_{0,1}\)) plots): transition matrix of \(X\) from \(t=0\) to \(t=1\) estimated from 2000 samples.}\label{fig:toy_num}
\end{figure}

\section{Non-Denoising Diffusions}\label{sec:random_field}

In computer vision applications, images of resolution \(H \ttimes W\) corresponds to \(D = 3HW\).
The use of an arbitrary covariance matrix \(\Gamma\) in \cref{eq:sde_time_scaled_bm,eq:sde_time_scaled_ou} requires its Cholesky (or equivalent) decomposition with cost \(\CO(D^3)\).
As the resolution increases the computational burden gets intractable very quickly.
Indeed, to the best of the authors' knowledge, all prior DDPM literature only considers independent transitions, that is \(\Gamma=\I\).
We suggest to view SDEs \cref{eq:sde_time_scaled_bm,eq:sde_time_scaled_ou} as corresponding to the space discretization on an \(H \ttimes W\) grid of a spatio-temporal process defined over the spatial domain \([0,1]^2\).
Consider the Euler discretization of \cref{eq:sde_time_scaled_ou}: \(X_{t+\Delta t} = X_t + \alpha_t \beta_t X_t \Delta t + \sqrt{\Delta t} \mathcal{E}_t\), where \(\mathcal{E}_t \sim \mathcal{N}_D(0, \Gamma)\), the idea is to adopt a functional perspective: \(X(t+\Delta t,s) = X(t,s) + \alpha_t \beta_t X(t,s) \Delta t+ \sqrt{\Delta t} \mathcal{E}(t,s)\) where \(s\in[0,1]^2\) defines space coordinates.
That is, both \(X(t)\) and \(\mathcal{E}(t)\) at each time \(t\) are random processes over \([0,1]^2\).
We assume the innovations \(\mathcal{E}(t)\) to be a Gaussian process (GP) for each \(t\).

As the GPs \(\mathcal{E}(t)\) are defined on a 2D domain we can leverage on scalable inference techniques from spatial statistics.
As an example, we consider the circulant embedding method (CEM) \citep{wood1994simulation,dietrich1997fast} which exploits a connection with the fast Fourier transform (FFT).
See \cref{app:additional_material} for a cursory review of the CEM.
Consider an \(H \ttimes W\) uniform grid \(\mathcal{S}\) of size \(S=HW\) discretizing \([0,1]^2\), i.e.\ the support of images.
For a stationary covariance function the CEM samples \(\mathcal{E}(t)\) on \(\mathcal{S}\) with cost \(\CO(D\ln(D))\).
This is close to the \(\CO(D)\) cost of sampling from a pure white-noise process, and compares very favorably to the \(\CO(D^3)\) cost of a Cholesky decomposition.
One limitation of CEM is that generated samples, while always Gaussian, might not have the correct covariances.
Whether this happens, and in that case the quality of the approximation, depends on the covariance function.
In \cref{app:additional_material} we select and fit an isotropic GP to the microscale properties of \(\mathcal{D}(\textsc{cifar})\).
For this estimated GP sampling is exact.
\cref{fig:circulant_embedding_samples} shows samples from a pure white-noise GP, i.e.\ \(\Gamma=\I\), (\nth{1} row) and from the fitted GP using CEM (\nth{2} row).
As noted in \cref{sec:transports_approximation}, sampling is enough to implement the MC estimators for \(\mathbb{L}_{\mathrm{CE,*}}\), but the MC estimators and drift adjustments for \(\mathbb{L}_{\mathrm{CE,*}}\) involve additional matrix multiplications by \(\Gamma\) and \(\Gamma^{-1}\).
The CEM allows to compute these at the same \(\CO(D\ln(D))\) cost if we define the GP \(\mathcal{E}(t)\) on a 2D torus \citep[Chapter 2.1]{rue2005gaussian}.
This corresponds to introducing dependencies between ``opposing'' boundaries of \([0,1]^2\).
\cref{fig:circulant_embedding_samples} (\nth{3} row) shows some samples, in the highlighted patch the opposing-boundaries dependency is evident.
Either way, all samples from the \nth{2} and \nth{3} rows of \cref{fig:circulant_embedding_samples} match the smoothness properties of \(\mathcal{D}(\textsc{cifar})\).

\begin{figure}[h]
\centering
\includegraphics[width=0.75\textwidth]{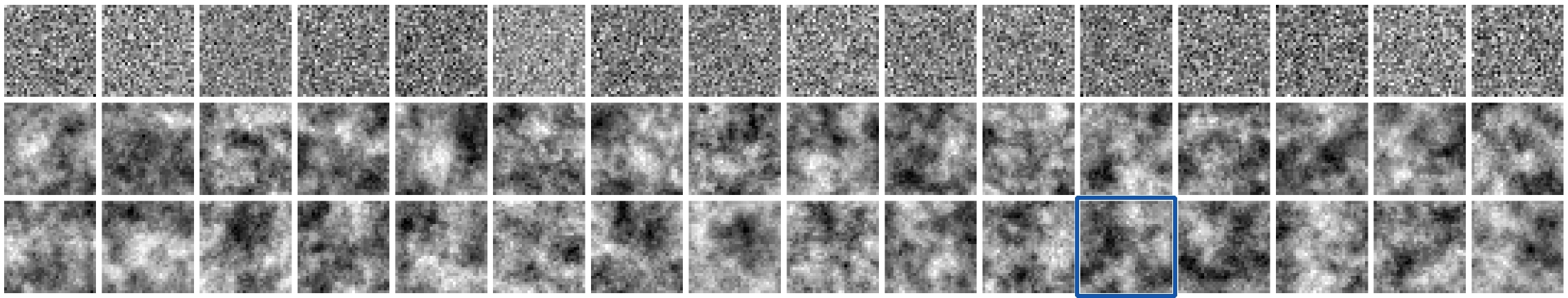}
\caption{Spatial GP samples, see the main text for the description.}\label{fig:circulant_embedding_samples}
\end{figure}

\section{Conclusions}\label{sec:conclusions}

The DBMT construction of \cref{sec:dbmt} is exact.
The SDE class of \cref{sec:sde_class} is tractable as it results in linear diffusion bridges.
The time-space factorization of the diffusion coefficient \(g(x,t)=\sqrt{\beta_t}\Gamma^{1/2}\) separates modeling concerns: \(\beta_t\) corresponds to a time-wrapping, \(\Gamma\) can be efficiently modeled by fitting the microscale properties of \(\Pdata\).
Availability of GPU-accelerated FFT implementations motivates our focus on the CEM.\@
Alternative scalable approaches abound, from Gaussian Markov Random Fields \citep{rue2002fitting,rue2001fast} to Karhunen–Loève expansions \citep{betz2014numerical}.
It remains to apply the results of this work to perform an empirical benchmarking.
\cref{sec:transports_approximation} develops three novel training objectives, two of which with desirable properties compared to the objective of \citet{song2021scorebased}, especially for non-factorial transitions.
We remark the simplicity of the proposed DBMT approach (\cref{alg:training_ce,alg:sampling_ce}) compared to alternatives grounded in the Schrödinger bridge problem \citep{debortoli2021diffusion,wang2021deep,vargas2021solvinga}.
The understanding of the target mappings (\cref{eq:dtrt_as_E,eq:dbmt_as_E}) can guide the development of neural networks more closely matching the target structure compared to the U-Net default choice.
\href{https://github.com/?}{\texttt{https://github.com/?}} links to the code accompanying this paper which is made available under the \texttt{MIT} license.

\clearpage
\bibliography{references}

\begin{thebibliography}{25}
\providecommand{\natexlab}[1]{#1}
\providecommand{\url}[1]{\texttt{#1}}
\expandafter\ifx\csname urlstyle\endcsname\relax
  \providecommand{\doi}[1]{doi: #1}\else
  \providecommand{\doi}{doi: \begingroup \urlstyle{rm}\Url}\fi

\bibitem[Anderson(1982)]{anderson1982reversetime}
Brian~D.O. Anderson.
\newblock {Reverse-Time Diffusion Equation Models}.
\newblock \emph{Stochastic Processes and their Applications}, 12\penalty0 (3):\penalty0 313--326, May 1982.

\bibitem[Betz et~al.(2014)Betz, Papaioannou, and Straub]{betz2014numerical}
Wolfgang Betz, Iason Papaioannou, and Daniel Straub.
\newblock {Numerical Methods for the Discretization of Random Fields by Means of the {{Karhunen}}–{{Loève}} Expansion}.
\newblock \emph{Computer Methods in Applied Mechanics and Engineering}, 271:\penalty0 109--129, April 2014.

\bibitem[Bladt et~al.(2016)Bladt, Finch, and Sørensen]{bladt2016simulation}
Mogens Bladt, Samuel Finch, and Michael Sørensen.
\newblock {Simulation of Multivariate Diffusion Bridges}.
\newblock \emph{Journal of the Royal Statistical Society. Series B (Statistical Methodology)}, 78\penalty0 (2):\penalty0 343--369, 2016.

\bibitem[Brigo(2002)]{brigo2002general}
Damiano Brigo.
\newblock {The General Mixture-Diffusion {{SDE}} and Its Relationship with an Uncertain-Volatility Option Model with Volatility-Asset Decorrelation}, December 2002.

\bibitem[Cressie(1993)]{cressie1993statistics}
Noel Cressie.
\newblock \emph{{Statistics for Spatial Data}}.
\newblock {John Wiley \& Sons}, 1993.

\bibitem[De~Bortoli et~al.(2021)De~Bortoli, Thornton, Heng, and Doucet]{debortoli2021diffusion}
Valentin De~Bortoli, James Thornton, Jeremy Heng, and Arnaud Doucet.
\newblock Diffusion {{Schrödinger Bridge}} with {{Applications}} to {{Score}}-{{Based Generative Modeling}}, June 2021.

\bibitem[Dietrich \& Newsam(1997)Dietrich and Newsam]{dietrich1997fast}
C.~R. Dietrich and G.~N. Newsam.
\newblock Fast and {{Exact Simulation}} of {{Stationary Gaussian Processes}} through {{Circulant Embedding}} of the {{Covariance Matrix}}.
\newblock \emph{SIAM Journal on Scientific Computing}, 18\penalty0 (4):\penalty0 1088--1107, July 1997.

\bibitem[Haussmann \& Pardoux(1986)Haussmann and Pardoux]{haussmann1986time}
U.~G. Haussmann and E.~Pardoux.
\newblock Time {{Reversal}} of {{Diffusions}}.
\newblock \emph{The Annals of Probability}, 14\penalty0 (4):\penalty0 1188--1205, October 1986.

\bibitem[Ho et~al.(2020)Ho, Jain, and Abbeel]{ho2020denoising}
Jonathan Ho, Ajay Jain, and Pieter Abbeel.
\newblock {Denoising Diffusion Probabilistic Models}.
\newblock In H.~Larochelle, M.~Ranzato, R.~Hadsell, M.~F. Balcan, and H.~Lin (eds.), \emph{Advances in Neural Information Processing Systems}, volume~33, pp.\  6840--6851, 2020.

\bibitem[Karatzas \& Shreve(1996)Karatzas and Shreve]{karatzas1996brownian}
Ioannis Karatzas and Steven~E. Shreve.
\newblock \emph{Brownian {{Motion}} and {{Stochastic Calculus}}}.
\newblock Number 113 in Graduate Texts in Mathematics. {Springer}, {New York}, 2nd ed edition, 1996.
\newblock ISBN 978-0-387-97655-6 978-3-540-97655-4.

\bibitem[Kloeden \& Platen(1992)Kloeden and Platen]{kloeden1992numerical}
Peter~E. Kloeden and Eckhard Platen.
\newblock \emph{Numerical {{Solution}} of {{Stochastic Differential Equations}}}.
\newblock {Springer Berlin Heidelberg}, {Berlin, Heidelberg}, 1992.
\newblock ISBN 978-3-642-08107-1 978-3-662-12616-5.

\bibitem[Krylov(1995)]{krylov1995introduction}
NV~Krylov.
\newblock \emph{Introduction to the {{Theory}} of {{Diffusion Processes}}}, volume 142.
\newblock {Providence}, 1995.

\bibitem[Millet et~al.(1989)Millet, Nualart, and Sanz]{millet1989integration}
Annie Millet, David Nualart, and Marta Sanz.
\newblock Integration by {{Parts}} and {{Time Reversal}} for {{Diffusion Processes}}.
\newblock \emph{The Annals of Probability}, pp.\  208--238, 1989.

\bibitem[Rogers \& Williams(2000)Rogers and Williams]{rogers2000diffusions}
L~Chris~G Rogers and David Williams.
\newblock \emph{Diffusions, {{Markov Processes}} and {{Martingales}}: {{Volume}} 2: {{Itô Calculus}}}, volume~2.
\newblock {Cambridge university press}, 2000.

\bibitem[Rue(2001)]{rue2001fast}
Havard Rue.
\newblock Fast {{Sampling}} of {{Gaussian Markov Random Fields}}.
\newblock \emph{Journal of the Royal Statistical Society. Series B (Statistical Methodology)}, 63\penalty0 (2):\penalty0 325--338, 2001.

\bibitem[Rue \& Held(2005)Rue and Held]{rue2005gaussian}
Håvard Rue and Leonhard Held.
\newblock \emph{Gaussian {{Markov Random Fields}}: {{Theory}} and {{Applications}}}.
\newblock Number 104 in Monographs on Statistics and Applied Probability. {Chapman \& Hall/CRC}, {Boca Raton}, 2005.
\newblock ISBN 978-1-58488-432-3.

\bibitem[Rue \& Tjelmeland(2002)Rue and Tjelmeland]{rue2002fitting}
Hååvard Rue and Hååkon Tjelmeland.
\newblock Fitting {{Gaussian Markov Random Fields}} to {{Gaussian Fields}}.
\newblock \emph{Scandinavian Journal of Statistics}, 29\penalty0 (1):\penalty0 31--49, 2002.

\bibitem[Sohl-Dickstein et~al.(2015)Sohl-Dickstein, Weiss, Maheswaranathan, and Ganguli]{sohl-dickstein2015deep}
Jascha Sohl-Dickstein, Eric Weiss, Niru Maheswaranathan, and Surya Ganguli.
\newblock {Deep Unsupervised Learning using Nonequilibrium Thermodynamics}.
\newblock In \emph{Proceedings of the 32nd International Conference on Machine Learning}, volume~37 of \emph{Proceedings of Machine Learning Research}, pp.\  2256--2265. PMLR, 2015.

\bibitem[Song et~al.(2021)Song, {Sohl-Dickstein}, Kingma, Kumar, Ermon, and Poole]{song2021scorebased}
Yang Song, Jascha {Sohl-Dickstein}, Diederik~P Kingma, Abhishek Kumar, Stefano Ermon, and Ben Poole.
\newblock Score-{{Based Generative Modeling}} through {{Stochastic Differential Equations}}.
\newblock In \emph{International {{Conference}} on {{Learning Representations}}}, 2021.

\bibitem[Särkkä \& Solin(2019)Särkkä and Solin]{sarkka2019applied}
Simo Särkkä and Arno Solin.
\newblock \emph{Applied {{Stochastic Differential Equations}}}.
\newblock {Cambridge University Press}, first edition, April 2019.
\newblock ISBN 978-1-108-18673-5 978-1-316-51008-7 978-1-316-64946-6.

\bibitem[Vargas et~al.(2021)Vargas, Thodoroff, Lamacraft, and Lawrence]{vargas2021solvinga}
Francisco Vargas, Pierre Thodoroff, Austen Lamacraft, and Neil Lawrence.
\newblock Solving {{Schrödinger Bridges}} via {{Maximum Likelihood}}.
\newblock \emph{Entropy}, 23\penalty0 (9):\penalty0 1134, September 2021.

\bibitem[Vincent(2011)]{vincent2011connection}
Pascal Vincent.
\newblock A {{Connection Between Score Matching}} and {{Denoising Autoencoders}}.
\newblock \emph{Neural Computation}, 23\penalty0 (7):\penalty0 1661--1674, July 2011.

\bibitem[Wang et~al.(2021)Wang, Jiao, Xu, Wang, and Yang]{wang2021deep}
Gefei Wang, Yuling Jiao, Qian Xu, Yang Wang, and Can Yang.
\newblock Deep {{Generative Learning}} via {{Schrödinger Bridge}}, June 2021.

\bibitem[Wood \& Chan(1994)Wood and Chan]{wood1994simulation}
Andrew T.~A. Wood and Grace Chan.
\newblock Simulation of {{Stationary Gaussian Processes}} in [0,1]d.
\newblock \emph{Journal of Computational and Graphical Statistics}, 3\penalty0 (4):\penalty0 409--432, 1994.

\bibitem[Øksendal(2003)]{oksendal2003stochastic}
B.~K. Øksendal.
\newblock \emph{Stochastic {{Differential Equations}}: {{An Introduction}} with {{Applications}}}.
\newblock Universitext. {Springer}, {Berlin ; New York}, 6th ed. edition, 2003.
\newblock ISBN 978-3-540-04758-2.

\end{thebibliography}
\bibliographystyle{iclr2022_conference}

\clearpage
\appendix

\section{Theoretical Framework}\label{app:theory}

\subsection{Assumptions}

\begin{assumption}[SDE solution]\label{ass:solution}
A given \(D\)-dimensional SDE\((f,g)\) with associated initial distribution \(\mathcal{V}_0\) and integration interval \([0,\tau]\) admits a unique strong solution on \([0,\tau]\).
\end{assumption}

\cref{ass:solution} can be checked through the application of the standard existence and uniqueness theorems for SDE solutions.
Of particular relevance to our setting is the formulation of \citet[Chapter 5, Theorem 1]{krylov1995introduction} that limits the monotonic requirement to, informally speaking, drifts that pull the process toward infinities.

\begin{assumption}[SDE density]\label{ass:density}
A given \(D\)-dimensional SDE\((f,g)\) with associated initial distribution \(\mathcal{V}_0\) and integration interval \([0,\tau]\) admits a marginal / transition density on \((0,\tau)\) with respect to the \(D\)-dimensional Lebesgue measure that uniquely satisfies the Fokker-Plank / Kolmogorov-forward partial differential equation (PDE).
\end{assumption}

We refer to \citet[Chapter 5]{sarkka2019applied} and to \citet[Chapter 5.7]{karatzas1996brownian} for connections between SDEs and PDEs.

All theoretical results of this work rely on simple algebraic manipulations and re-arrangements of quantities of interest.
The main complication stems from the need to justify differentiation and integration exchanges, i.e.\ exchange of limits.

\begin{assumption}[exchange of limits]\label{ass:exchange}
We assume that limits exchanges are justified in the steps marked with \((\star)\) and \((\star\star)\).
\end{assumption}

Similarly, various steps in the derivations involve considering fractional quantities with densities appearing in the denominators.

\begin{assumption}[positivity]\label{ass:positive}
For a given stochastic process, all finite-dimensional densities, conditional or not, are strictly positive.
\end{assumption}

\cref{ass:positive} is easy to verify.
We resorted to the practical but somewhat unsatisfactory formulation of \cref{ass:exchange} because in full generality it is complicated to give easy to check conditions.
We just note that when \(\Pi_T = \Pdata\), the limit exchange marked with \((\star)\) is always justified.
So are the limits exchanges marked with \((\star\star)\) when in addition \(\Pi_0\) puts all the mass to a fixed initial value, or (by direct verification) when \(\Pi_0\) is Gaussian for the SDE class of \cref{sec:sde_class}.
Thorough this paper, both in the main text and in the proofs that follow, it is supposed that \cref{ass:solution,ass:density,ass:positive} are satisfied by SDEs \cref{eq:song_sde_noising,eq:sde,eq:sde_bridge}.
This is the case for the SDE class of \cref{sec:sde_class}, i.e. \cref{eq:sde_time_scaled_bm,eq:sde_time_scaled_ou}, for any \(\Pi_0\) with finite variance.

\textit{Remark:} For ease of exposition it is assumed thorough this paper that all diffusions take values in the state space \(\R^D\). There is no impediment in extending the presented results to the case of diffusions taking values in a subset \(\mathcal{X} \subset \R^D\). The obvious changes to \cref{ass:solution,ass:density,ass:exchange,ass:positive} apply, the proofs carry over without substantial modifications. This extension could be of practical interest as images are often represented as floating point values in \([0,1]\).

\subsection{Statement and Proof of Diffusion Mixture Representation Theorem}

\begin{theorem}[Diffusion mixture representation]\label{thm:mixture_of_diffusions_exact}
Consider the family of \(D\)-dimensional SDEs on \(t \in [0,\tau]\) indexed by \(\lambda \in \Lambda\)
\begin{equation}\label{eq:sde_family}\begin{aligned}
 & dX_t^\lambda = \mu^\lambda(X_t^\lambda,t)dt + \sigma^\lambda(X_t^\lambda,t)dW_t^\lambda, \\
 & X_0^\lambda \sim \mathcal{V}_0^\lambda,
\end{aligned}\end{equation}
where the initial distributions \(\mathcal{V}_0^\lambda\) and the BMs \(W_t^\lambda\) are all independent.
Let \(\nu_t^\lambda, t \in (0,\tau)\) denote the marginal density of \(X_t^\lambda\).
For a generic mixing distribution \(\mathcal{L}\) on \(\Lambda\), define the mixture marginal density \(\nu_t\) for \(t \in (0,\tau)\) and the mixture initial distribution \(\mathcal{V}_0\) by
\begin{equation}\label{eq:distribution_mixture}
\nu_t(x) = \int_\Lambda \nu_t^\lambda(x) \mathcal{L}(d\lambda),\quad \mathcal{V}_0(dx) = \int_\Lambda \mathcal{V}_0^\lambda(dx) \mathcal{L}(d\lambda).
\end{equation}
Consider the \(D\)-dimensional SDE on \(t \in [0,\tau]\) defined by
\begin{equation}\label{eq:sde_mixture}\begin{aligned}
 & \mu(x, t) = \frac{\int_\Lambda \mu^\lambda(x,t)\nu_t^\lambda(x)\mathcal{L}(d\lambda)}{\nu_t(x)},       \\
 & \sigma(x, t) = \frac{\int_\Lambda \sigma^\lambda(x,t)\nu_t^\lambda(x)\mathcal{L}(d\lambda)}{\nu_t(x)}, \\
 & dX_t = \mu(X_t,t)dt + \sigma(X_t,t)dW_t,                                                               \\
 & Y_0 \sim \mathcal{V}_0.
\end{aligned}\end{equation}
It is assumed that all diffusion processes \(X^\lambda\) and the diffusion process \(X\) solving \cref{eq:sde_mixture} satisfy the regularity assumptions \cref{ass:solution,ass:density,ass:positive} and that \cref{ass:exchange} holds.
Then the marginal distribution of the diffusion \(X\) is \(\nu_t\).
\end{theorem}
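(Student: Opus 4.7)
The plan is to prove the theorem via the Kolmogorov forward (Fokker--Planck) PDE together with a PDE-uniqueness argument. By \cref{ass:solution,ass:density} applied to the mixture SDE \cref{eq:sde_mixture}, the diffusion $X$ admits a marginal density, call it $\pi_t$, that is the unique solution of the Fokker--Planck PDE associated with the coefficients $\mu(\blank,t)$, $\sigma(\blank,t)$, with initial condition $\mathcal{V}_0$. It therefore suffices to show that the candidate density $\nu_t$ defined in \cref{eq:distribution_mixture} satisfies exactly that same PDE and that same initial condition; by uniqueness we then get $\pi_t = \nu_t$.

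First I would record the Fokker--Planck PDE satisfied by each $\nu_t^\lambda$, guaranteed by \cref{ass:solution,ass:density} applied to \cref{eq:sde_family}:
$$
\partial_t \nu_t^\lambda(x) = -\sum_i \partial_{x_i}\!\left[\mu^\lambda_i(x,t)\,\nu_t^\lambda(x)\right] + \tfrac{1}{2}\sum_{i,j}\partial_{x_i x_j}\!\left[\Sigma^\lambda_{ij}(x,t)\,\nu_t^\lambda(x)\right],
$$
where $\Sigma^\lambda(x,t) = \sigma^\lambda(x,t)\sigma^\lambda(x,t)\T$. Next I would integrate both sides against the mixing measure $\mathcal{L}(d\lambda)$. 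On the left, the exchange of $\partial_t$ with $\int_\Lambda \mathcal{L}(d\lambda)$ (one of the $(\star)/(\star\star)$ steps covered by \cref{ass:exchange}) turns $\int \partial_t \nu_t^\lambda\,\mathcal{L}(d\lambda)$ into $\partial_t \nu_t(x)$. On the right, the analogous exchange pulls $\partial_{x_i}$ and $\partial_{x_i x_j}$ outside the $\lambda$-integrals, yielding
$$
\partial_t \nu_t(x) = -\sum_i \partial_{x_i}\!\left[\int_\Lambda \mu^\lambda_i(x,t)\nu_t^\lambda(x)\,\mathcal{L}(d\lambda)\right] + \tfrac{1}{2}\sum_{i,j}\partial_{x_i x_j}\!\left[\int_\Lambda \Sigma^\lambda_{ij}(x,t)\nu_t^\lambda(x)\,\mathcal{L}(d\lambda)\right].
$$

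The final algebraic step is to identify the inner integrals with the mixture coefficients. By the very definition of $\mu(x,t)$ in \cref{eq:sde_mixture}, $\int \mu^\lambda_i \nu_t^\lambda\,\mathcal{L}(d\lambda) = \mu_i(x,t)\,\nu_t(x)$, and by the definition of the mixture diffusion (reading $\sigma$ in \cref{eq:sde_mixture} at the level of the diffusion tensor, which is all the Fokker--Planck operator actually sees) $\int \Sigma^\lambda_{ij}\nu_t^\lambda\,\mathcal{L}(d\lambda) = [\sigma(x,t)\sigma(x,t)\T]_{ij}\,\nu_t(x)$. Substituting, $\nu_t$ satisfies the Fokker--Planck PDE of the mixture SDE. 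The matching of the initial condition at $t=0$ is immediate from the second identity in \cref{eq:distribution_mixture}, and strict positivity (\cref{ass:positive}) ensures that the definitions of $\mu,\sigma$ do not involve dividing by zero. The uniqueness clause of \cref{ass:density} closes the argument.

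The main obstacle is bookkeeping around \cref{ass:exchange}: the interchange of $\int_\Lambda \mathcal{L}(d\lambda)$ with $\partial_t$, $\partial_{x_i}$ and $\partial_{x_i x_j}$ must be invoked cleanly at each occurrence, and these are precisely the $(\star)$ and $(\star\star)$ steps the paper isolates. A secondary subtlety is that the averaging of $\sigma^\lambda$ as written in \cref{eq:sde_mixture} is not the same as averaging $\sigma^\lambda(\sigma^\lambda)\T$; the PDE argument naturally identifies only the diffusion tensor, so the statement is unambiguous in the application of interest (mixture of diffusion bridges in \cref{sec:diffusion_mixtures}), where all $\sigma^\lambda$ coincide with a common $g(\blank,t)$ and the distinction disappears.
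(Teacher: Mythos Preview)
Your proposal is correct and follows essentially the same route as the paper's proof: write the Fokker--Planck PDE for each \(\nu_t^\lambda\), integrate over \(\mathcal{L}(d\lambda)\), invoke \cref{ass:exchange} to swap derivatives with the \(\lambda\)-integral, recognize the result as the Fokker--Planck PDE of \cref{eq:sde_mixture}, and close with the uniqueness clause of \cref{ass:density}. Your flagging of the \(\sigma\) versus \(\sigma\sigma\T\) averaging issue is apt; the paper's own proof glosses over it via its compact \((\blank)_{xx}\) notation, and as you note the point is moot in the intended application where all \(\sigma^\lambda\) coincide.
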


\begin{proof}[Proof of \cref{thm:mixture_of_diffusions_exact}]
We start by establishing that the law of \(X\) is indeed given by the solution of \cref{eq:sde_mixture}.
In this proof we make use of the following notation: for \(f\) scalar-valued \({(f)}_t = \frac{d}{dt}f\), for \(a\) vector-valued \({(a)}_x = \sum_{i=1}^{D}\frac{d}{dx_i}a\), for \(A\) matrix-valued \({(A)}_{xx} = \sum_{i,j=1}^{D}\frac{d^2}{dx_i dx_j}A\).
This notation allows for a compact representation of PDEs reminiscent of the 1-dimensional setting.
Then, for \(0<t<\tau\) we have that
\begin{align*}
 & {(\nu(x,t))}_t = {\left(\int_\Lambda \nu^\lambda(x,t) \mathcal{L}(d\lambda)\right)}_t                                                                                                                                                                                   \\
 & \quad = \int_\Lambda{\left(\nu^\lambda(x,t)\right)}_t \mathcal{L}(d\lambda)\tag{\(\star\star\)}                                                                                                                                                                         \\
 & \quad = \int_\Lambda{\left(\mu^\lambda(x,t)\nu^\lambda(x,t)\right)}_x + \frac{1}{2}{\left(\sigma^\lambda(x,t)\nu^\lambda(x,t)\right)}_{xx} \mathcal{L}(d\lambda)                                                                                                        \\
 & \quad = \int_\Lambda{\left(\frac{\mu^\lambda(x,t)\nu^\lambda(x,t)}{\nu(x,t)}\nu(x,t)\right)}_x + \frac{1}{2}{\left(\frac{\sigma^\lambda(x,t)\nu^\lambda(x,t)}{\nu(x,t)}\nu(x,t)\right)}_{xx} \mathcal{L}(d\lambda)                                                      \\
 & \quad = {\left(\int_\Lambda\frac{\mu^\lambda(x,t)\nu^\lambda(x,t)}{\nu(x,t)}\mathcal{L}(d\lambda)\nu(x,t)\right)}_x + \frac{1}{2}{\left(\int_\Lambda\frac{\sigma^\lambda(x,t)\nu^\lambda(x,t)}{\nu(x,t)}\mathcal{L}(d\lambda)\nu(x,t)\right)}_{xx}\tag{\(\star\star\)}.
\end{align*}
The second line is an exchange of limits, the third line is the application of the Fokker-Plank PDEs for the collection of processes \(X^\lambda\), the fourth line is a rewriting in terms of \(\nu(y,t)\), the last line is another exchange of limits.
The result follows by noticing that the last line gives the Fokker-Plank representation of \cref{eq:sde_mixture}.
\end{proof}

\subsection{Drift Adjustments}

Limitedly to this section, we lighten the notation by removing subscripts from probability measures and densities.
The missing time points can be inferred without ambiguity from the variables.

\subsubsection{Drift Adjustment Identities for Constant Initial Value}

First identity:
\begin{align*}
 & \G_{x_t}\ln \int\frac{p(x_\tau|x_t)}{p(x_\tau|x_0)}\Pi(dx_\tau)                                                                                                    \\
 & \quad = \int\frac{\G_{x_t}p(x_\tau|x_t)}{p(x_\tau|x_0)}p(x_t|x_0)\Pi(dx_\tau) \bigg/ \int\frac{p(x_\tau|x_t)}{p(x_\tau|x_0)}p (x_t|x_0)\Pi(dx_\tau)\tag{\(\star\)} \\
 & \quad = \int \G_{x_t}\ln p(x_\tau|x_t) \frac{p(x_\tau,x_t|x_0)}{p(x_\tau|x_0)}\Pi(dx_\tau) \bigg/ \int\frac{p(x_\tau,x_t|x_0)}{p (x_\tau|x_0)}\Pi(dx_\tau)         \\
 & \quad = \int \G_{x_t}\ln p(x_\tau|x_t) p(x_t|x_0, x_\tau) \Pi(dx_\tau) \bigg/ \pi(x_t|x_0)                                                                         \\
 & \quad = A(x_t,t,x_0).
\end{align*}

Second identity:
\begin{align*}
 & \G_{x_t}\ln \int\frac{p(x_\tau|x_t)}{p(x_\tau|x_0)}\Pi(dx_\tau)                                           \\
 & \quad= \G_{x_t}\ln \int\frac{p(x_\tau|x_t)}{p(x_\tau|x_0)}p(x_t|x_0)\Pi(dx_\tau) - \G_{x_t}\ln p(x_t|x_0) \\
 & \quad= \G_{x_t}\ln \pi(x_t|x_0) - \G_{x_t}\ln p(x_t|x_0).
\end{align*}

\subsubsection{Drift Adjustments as Expectations}

To establish \cref{eq:dbmt_as_E} notice that from \cref{eq:sde_td_x_score} we have
\begin{align*}
 & G(x_t,t) A(x_t,t)                                                                                                                                                              \\
 & \quad = \beta_t\Gamma \Gamma^{-1} \int \left(\frac{x_\tau}{a(t,\tau)} - x_t\right)\frac{a^2(t,\tau)}{v(t,\tau)} \frac{p(x_t|x_0, x_\tau)}{\pi(x_t)} \Pi_{0,\tau}(dx_0,dx_\tau) \\
 & \quad = \beta_t\left(\frac{1}{a(t,\tau)}\int x_\tau \frac{\pi(x_t|x_0, x_\tau)}{\pi(x_t)} \Pi_{0,\tau}(dx_0,dx_\tau) - x_t\right)\frac{a^2(t,\tau)}{v(t,\tau)}                 \\
 & \quad = \beta_t\left(\frac{1}{a(t,\tau)}\E_{X_\tau \sim \Pi(dx_\tau|x_t)}[X_\tau] - x_t\right)\frac{a^2(t,\tau)}{v(t,\tau)}.
\end{align*}

To establish \cref{eq:dtrt_as_E} note that from \cref{eq:sde_td_y_score} we have
\begin{align*}
 & G(y_r,r)\G_{y_r}\ln q(y_r) = \Gamma \int \G_{y_r}\ln q(y_r|y_0) \frac{q(y_r|y_0)}{q(y_r)} \Pdata(dy_0)                     \\
 & \quad = \beta_r \Gamma \Gamma^{-1} \int \left(\frac{a(0,r)y_0 - y_r}{v(0,r)}\right) \frac{q(y_r|y_0)}{q(y_r)} \Pdata(dy_0) \\
 & \quad = \beta_r \left(a(0,r)\int y_0 \frac{q(y_r|y_0)}{q(y_r)} \Pdata(dy_0) - y_r\right)\frac{1}{v(0,r)}                   \\
 & \quad = \beta_r \left(a(0,r)\E_{Y_0 \sim Q(dx_0|y_r)}[Y_0] - y_r\right)\frac{1}{v(0,r)}.
\end{align*}

\section{SDEs Class Formulas}\label{app:sde_class}

The transition densities of \cref{eq:sde_base_bm,eq:sde_base_ou} are given respectively by
\begin{align*}
 & \widetilde{p}_{\mathrm{bm},\tau|t}(z_\tau|z_t) = \mathcal{N}_D\left(z_\tau;\,z_t,\,\Gamma(\tau-t)\right),                                                                                                                                  \\
 & \widetilde{p}_{\mathrm{ou},\tau|t}(z_\tau|z_t) = \mathcal{N}_D\left(z_\tau;\,z_t e^{\overline{\alpha}_{t:\tau}(\tau-t)},\,\Gamma\left(\frac{1}{2 \alpha_t}e^{2\overline{\alpha}_{t:\tau}(\tau-t)} - \frac{1}{2 \alpha_\tau}\right)\right).
\end{align*}
Here we used the notation \(\overline{f}_{t:\tau} = \frac{1}{\tau-t}\int_t^\tau f_u du\), i.e. \(\overline{f}_{t:\tau}\) is the average value of a function \(f_u\) on the interval \([t,\tau]\).
The time-homogenous case of \cref{eq:sde_base_ou}, where \(\overline{\alpha}_{t:\tau}=\alpha\), is thus immediately recovered.

The scalar functions \(a_{\mathrm{bm}}(t,\tau)\), \(a_{\mathrm{ou}}(t,\tau)\), \(v_{\mathrm{bm}}(t,\tau)\) and \(v_{\mathrm{ou}}(t,\tau)\) are given by
\begin{align*}
 & a_{\mathrm{bm}}(t,\tau) = 1,                                              &  & v_{\mathrm{bm}}(t,\tau) = b_\tau - b_t,                                                                                          \\
 & a_{\mathrm{ou}}(t,\tau) = e^{\overline{\alpha}_{b_t:b_\tau}(b_\tau-b_t)}, &  & v_{\mathrm{ou}}(t,\tau) = \frac{1}{2 \alpha_{b_t}}e^{2\overline{\alpha}_{b_t:b_\tau}(b_\tau-b_t)} - \frac{1}{2 \alpha_{b_\tau}}.
\end{align*}

The scalar functions \(v_{\mathrm{br}}(0,t,\tau)\), \(\underline{a}_{\mathrm{br}}(0,t,\tau)\) and \(\overline{a}_{\mathrm{br}}(0,t,\tau)\) are given by
\begin{align*}
 & v_{\mathrm{br}}(0,t,\tau) = \frac{v(0,t)v(t,\tau)}{v(0,t)a^2(t,\tau)+v(t,\tau)},             \\
 & \underline{a}_{\mathrm{br}}(0,t,\tau) = \frac{v(t,\tau)a(0,t)}{v(0,t)a^2(t,\tau)+v(t,\tau)}, \\
 & \overline{a}_{\mathrm{br}}(0,t,\tau) = \frac{v(0,t)a(t,\tau)}{v(0,t)a^2(t,\tau)+v(t,\tau)}.
\end{align*}

The scalar functions \(\beta_{\mathrm{ve},r}\) and \(\beta_{\mathrm{vp},r}\) are given by
\begin{equation*}
\beta_{\mathrm{ve},r} = \sigma^2_{\min}\left(\frac{\sigma_{\max}}{\sigma_{\min}}\right)^{2r} 2 \log \frac{\sigma_{\max}}{\sigma_{\min}},\quad \beta_{\mathrm{vp},r} = \left(\bar{\beta}_{\min} +r\left(\bar{\beta}_{\max}-\bar{\beta}_{\min}\right)\right).
\end{equation*}
The constants \(\sigma_{\min},\sigma_{\max},\bar{\beta}_{\min},\bar{\beta}_{\max}\) depend in part on the dataset considered, but are consistently chosen to have \(\beta_{\mathrm{ve},r},\beta_{\mathrm{vp},r}\) small for \(r \approx 0\) and large for \(r \approx \tau\).

The approximating distributions \(\Pz\) in \citet{song2021scorebased} are \(\Pz^{\mathrm{ve}}=\mathcal{N}_D(0,\I \sigma_{\max}^2)\) for VESDE, \(\Pz^{\mathrm{vp}}=\mathcal{N}_D(0,\I)\) for VPSDE.\@

\section{Additional Material}\label{app:additional_material}

\subsection{Closely Related Work}

A work closely related to the present paper is that of \citet{wang2021deep} as it similarly avoids the time-reversal construction.
\citet{wang2021deep} construct a 2-stages diffusion process from a constant initial value \(x_0\) to \(\Pdata\) by relying on the theory of Schrödinger bridges.
The most notable differences with respect to the DBMT transport are:
(i) the dynamics considered in \citet{wang2021deep} are less general, in our notation they correspond to \(f(\blank)=0,g(\blank)=\sigma\I\) for a fixed scalar \(\sigma\);
(ii) the transport proposed in \citet{wang2021deep} necessarily starts from \(x_0\), the general result of \cref{eq:dbm_transport} allows for (almost) arbitrary initial distributions and initial-terminal dependencies.
For an initial \(x_0\), i.e.\@ for case of \(A(x_t,t_x0)\) in \cref{sec:diffusion_mixtures,} and for the more limited dynamics considered in \citet{wang2021deep}, the achieved transport is the same.
In this sense the DBMT generalizes the first stage diffusion of \citet{wang2021deep}.
It is interesting to note that in the case of a constant \(x_0\) the DBMT can also be obtained by an application of Doob \(h\)-transforms as we show in the following section.

We now review two additional works grounded in the Schrödinger bridge problem: \citet{debortoli2021diffusion,vargas2021solvinga}.
Both works rely on the Iterative Proportional Fitting (IPF) procedure to solve the (dynamic) Schrödinger bridge problem.
Both works leverage on time-reversal results to carry out the alternated Schrödinger half-bridge IPF iterations.
The main difference between the two works is that \citet{debortoli2021diffusion} estimates the optimal SDE drifts via neural network approximations and score-matching, while \citet{vargas2021solvinga} relies on Gaussian Processes and maximum likelihood fitting.
The work of \citet{debortoli2021diffusion} can be seen as an extension of \citet{song2021scorebased}, and similarly to our work allows the use of shorter time intervals.
Compared to our proposal, it solves a harder problem but also presents additional difficulties.
Training is more involved as all the neural network approximations, one for each IPF iterate, need to converge.
Moreover, there is limited guidance on how to optimally choose the number of integration steps over the number of IPF iterates.

\subsection{Connection with Doob h-transforms}

The previously established identity
\begin{equation*}
A(x_t,t,x_0) = \G_{x_t}\ln \int\frac{p_{\tau|t}(x_\tau|x_t)}{p_{\tau|0}(x_\tau|x_0)}\Pi_\tau(dx_\tau),
\end{equation*}
shows that the drift adjustment can be equivalently expressed as
\begin{equation*}
\mu(x_t,t) = f(x_t,t) + G(x_t,t)\G_{x_t}h(x_t,t),\quad h(x_t,t) = \ln \int\frac{p_{\tau|t}(x_\tau|x_t)}{p_{\tau|0}(x_\tau|x_0)}\Pi_\tau(dx_\tau),
\end{equation*}
as \(x_0\) is a constant.
It can be verified that the \(h\) function satisfies the required space-time regularity property \citep[Eq. (7.73)]{sarkka2019applied}.
As such, it is a genuine Doob \(h\)-transform.
That \(p^h_{t'|t}(x_{t'}|x_{t}) = p_{t'|t}(x_{t'}|x_{t})h(x_{t'},t')/h(x_t,t)\) is the transition density of the DBMT transport from \(\delta_{x_0}\) to \(\Pi_\tau\) follows by direct computation.

\subsection{GP Modelling on CIFAR10}

For simplicity, we assume a factorial distribution over the channels and an isotopic stationary covariance function.
We rely on the semivariogram approach \citep{cressie1993statistics} to compare how different covariance functions fit \(\mathcal{D}(\textsc{cifar})\).
A semivariogram is a measure of dependency across space.
In the case of an isotropic stationary covariance it simplifies to a scalar function of the Euclidean distance between points: \(\gamma(\lVert\Delta s\rVert) = \E[(\Delta x_s)^2]/2\) with \(\Delta x_s = x_{s + \Delta s} - x_s\).
The rate of decrease of \(\gamma(\lVert\Delta s\rVert)\) toward \(0\) as \(\lVert\Delta s\rVert \rightarrow 0\) gives a measure of the infinitesimal spatial dependency, i.e.\ the smoothness of the spatial process.
Semivariograms corresponding to different covariance functions are here fitted to their empirical counterparts via a weighted minimum-least-squares procedure \citep{cressie1993statistics}.
\cref{fig:semivariograms} illustrates the exponential and RBF semivariogram fits for two images of \(\mathcal{D}(\textsc{cifar})\).
The exponential covariance, which corresponds to rougher paths, provides a much better fit to the shown samples.
This result is consistent across \(\mathcal{D}(\textsc{cifar})\).
We remark that \(\Gamma=\I\) corresponds to a pure white-noise process with a perfectly flat semivariogram which would clearly result in a very poor fit to the empirical semivariograms shown in \cref{fig:semivariograms}.
Based on these findings, we model the innovations of each image channel as a GP with exponential covariance function with length-scale \(\theta=0.205\), the median estimated value (\cref{fig:semivariograms} (right)).
We match the marginal variance to that of \(\mathcal{D}(\textsc{cifar})\), \(\sigma^2=0.063\).
\begin{figure}[h]
\centering
\includegraphics[width=0.325\textwidth]{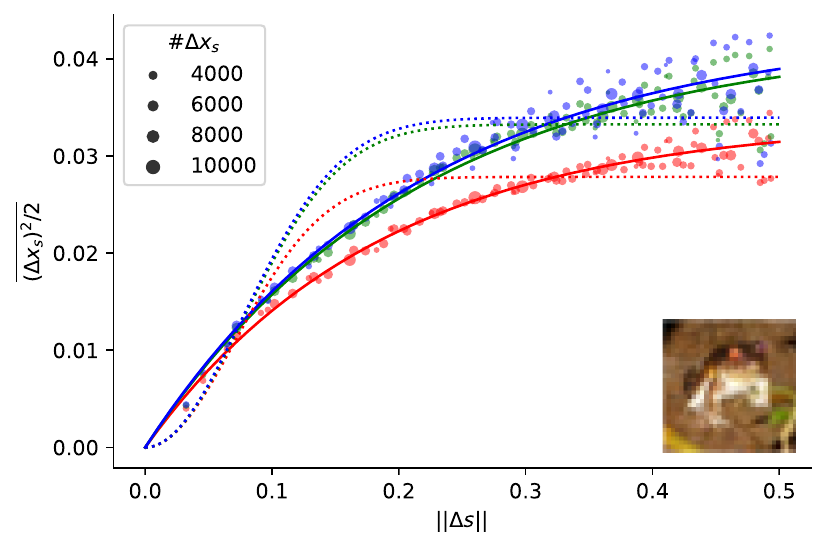}
\includegraphics[width=0.325\textwidth]{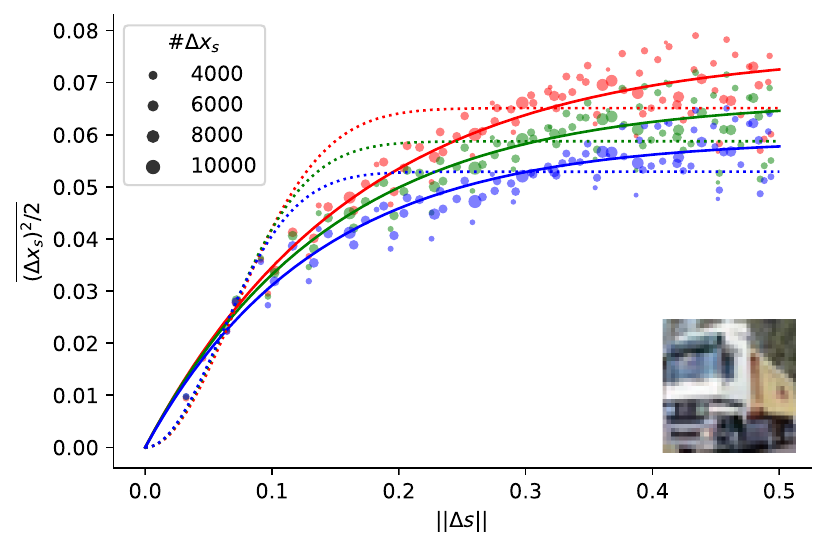}
\includegraphics[width=0.325\textwidth]{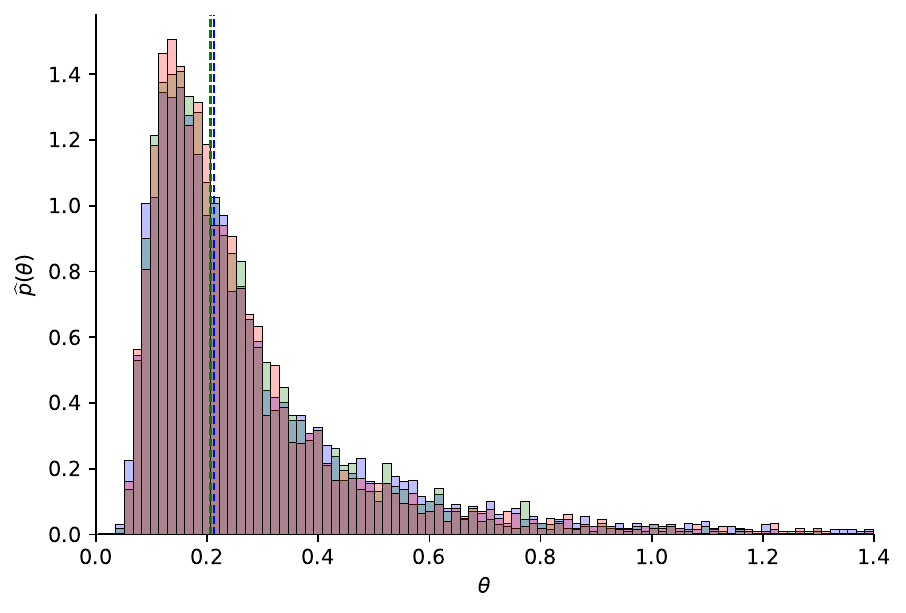}
\caption{Empirical semivariograms (dots) and fitted exponential (solid lines) and RBF (doted lines) semivariograms for the \nth{1} (left) and \nth{2} (center) image of \(\mathcal{D}(\textsc{cifar})\); histograms (bins) and medians (lines) of the distributions of the length-scale parameters in the exponential variogram model over \(\mathcal{D}(\textsc{cifar})\) (right); colors represent the RGB channels.}\label{fig:semivariograms}
\end{figure}

\subsection{Circulant Embedding Method}

\begin{figure}[h]
\centering
\includegraphics[width=0.75\textwidth]{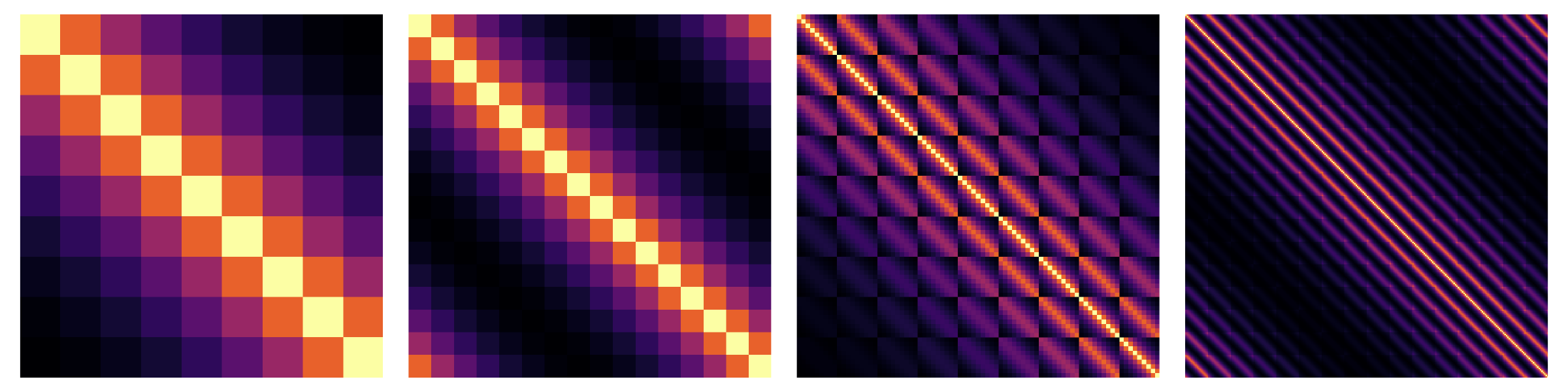}
\caption{Circulant embedding covariance matrices, see the appendix's main text for the description.}\label{fig:circulant_construction}
\end{figure}

We start by providing a cursory explanation leading to efficient sampling in the 1D case, before giving the intuition behind the extension to the 2D case.
We refer to \citet{wood1994simulation,dietrich1997fast} for a complete explanation and to \citet{rue2005gaussian} for the results underlying efficient density (likelihood) computation.
Let \([0,1]\) be the spatial domain of interest.
Let \(\mathcal{S} = \{s_i\}_{i=1}^M\) be a uniform grid (regular lattice) discretizing \([0,1]\), where the points \(s_i\) are assumed to be ordered.
The first key observation is that for a stationary covariance function \(\rho(\blank,\blank)\) the covariance matrix \(C\) with entries \(C_{i,j}=\rho(s_j,s_j)\) is symmetric and Toeplitz, i.e.\ with constant-diagonals.
See \cref{fig:circulant_construction} (leftmost) for an example where \(M=9\).
A property of symmetric Toeplitz matrices is that they can always be embedded in larger symmetric circulant matrices.
A circulant matrix of size \(M' \ttimes M'\) is defined by the property that all its rows (and columns) are obtained by cycling through the same \(M'\)-dimensional vector.
Circulant matrices correspond to covariance matrices of GPs defined on a (here 1D) torus \citep[Chapter 2.1]{rue2005gaussian}.
The circulant embedding matrix just introduced corresponds to an artificial enlargement of the spatial domain \([0,1]\) to a larger interval leading to a torus.
See \cref{fig:circulant_construction} (\nth{2} from left) for a circulant embedding of \(C\).
The second key observation is that a circulant matrix is diagonalized by the 1D FFT matrix.
Having obtained the eigenvalues of \(C\), efficient sampling on the enlarged domain is achieved by the 1D FFT applied to complex standard random numbers multiplied by the (square root of the) eigenvalues.
The real and imaginary part of the generated samples are independent.
The main issue with the CEM is that the circulant matrix embedding might fail to be positive definite.
The issue can be avoided by considering progressively larger embeddings, see the theoretical and empirical findings of \citet{dietrich1997fast}.
Otherwise, a level of approximation can be accepted by modifying the covariance function or by truncating the eigenvalues to be positive.

The development of the 2D CEM follows very similar steps.
The domain of interest is now \([0,1]^2\), the uniform grid is \(\mathcal{S} = \{s_{i,j}\}_{i,j=1}^M\) and the points \(s_{i,j}\) are assumed to be lexicographically ordered.
The stationarity of the covariance functions results in a symmetric block-Toeplitz covariance matrix, as shown in \cref{fig:circulant_construction} (\nth{3} from left).
Again, symmetric block-Toeplitz matrices can be embedded in symmetric block-circulant matrices, as exemplified by \cref{fig:circulant_construction} (rightmost, zooming might be required to see the block structure).
Block-circulant matrices can be shown to be diagonalized by the 2D FFT matrix, and efficient sampling follows from similar steps to the ones seen in the 1D case.

\section{Additional Figures}\label{app:additional_figures}

\begin{figure}[h]
\centering
\includegraphics[width=0.8\textwidth]{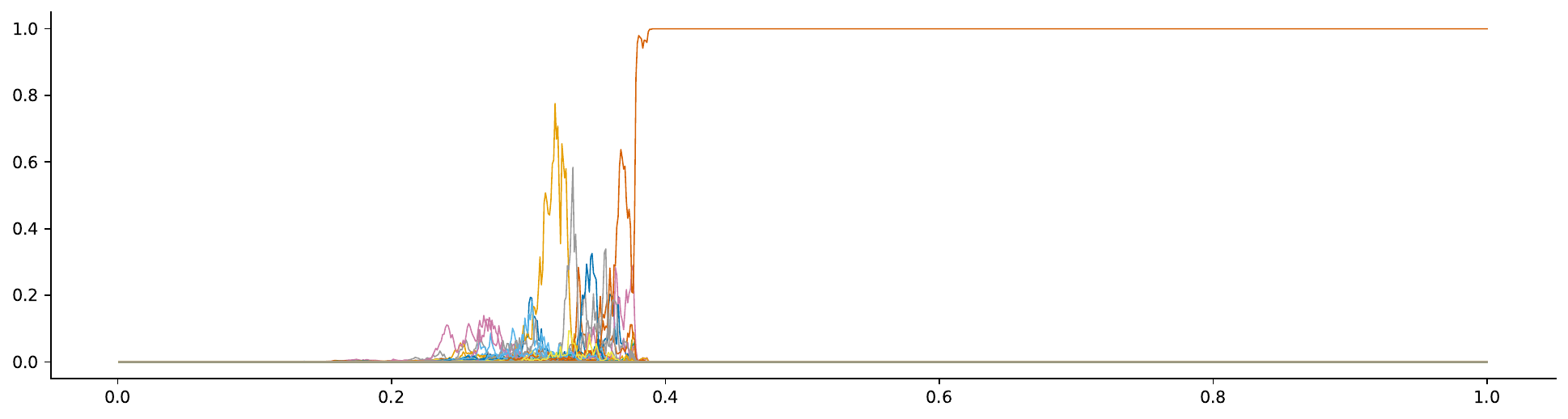}
\hspace*{-0.35cm}\includegraphics[width=0.75\textwidth]{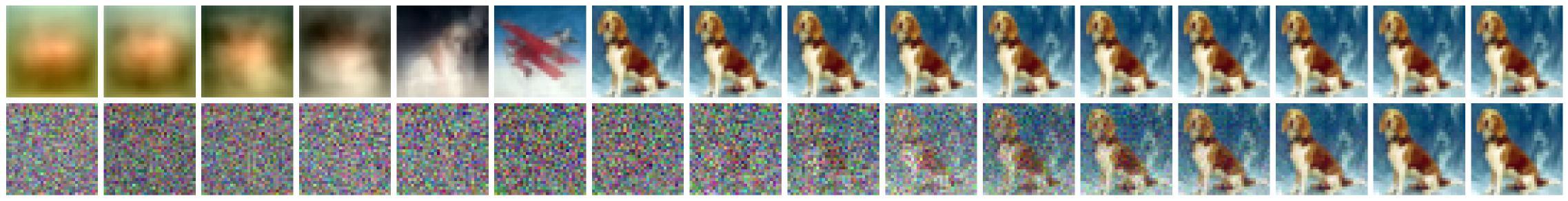}
\hspace*{-0.35cm}\includegraphics[width=0.75\textwidth]{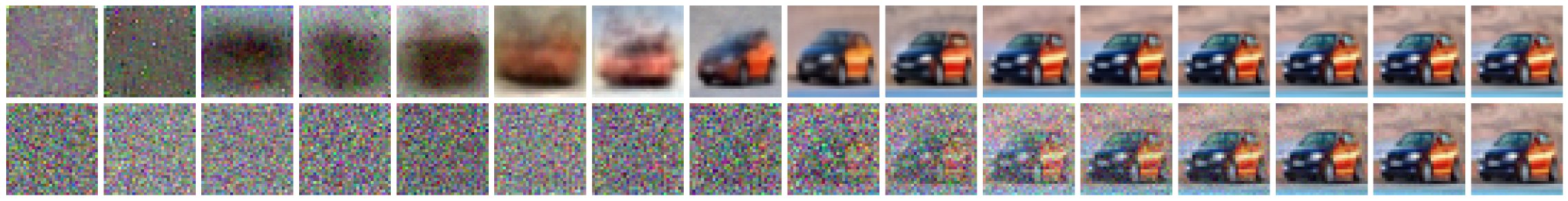}
\hspace*{-0.35cm}\includegraphics[width=0.75\textwidth]{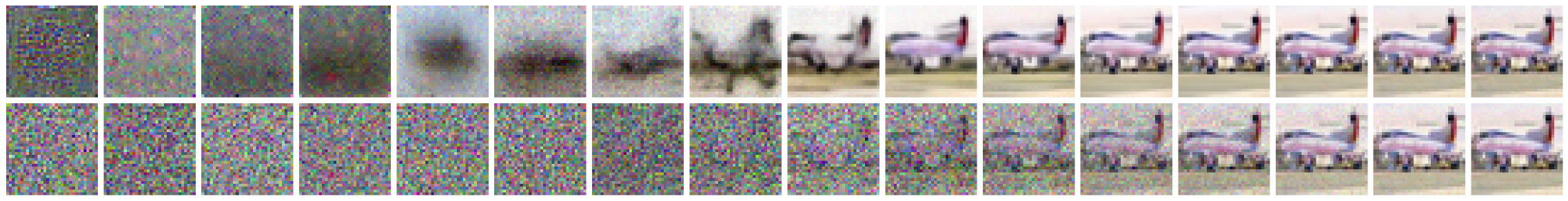}
\caption{Same as \cref{fig:vpsde_summary} for VESDE model.}
\end{figure}

\begin{figure}[h]
\centering
\includegraphics[width=0.75\textwidth]{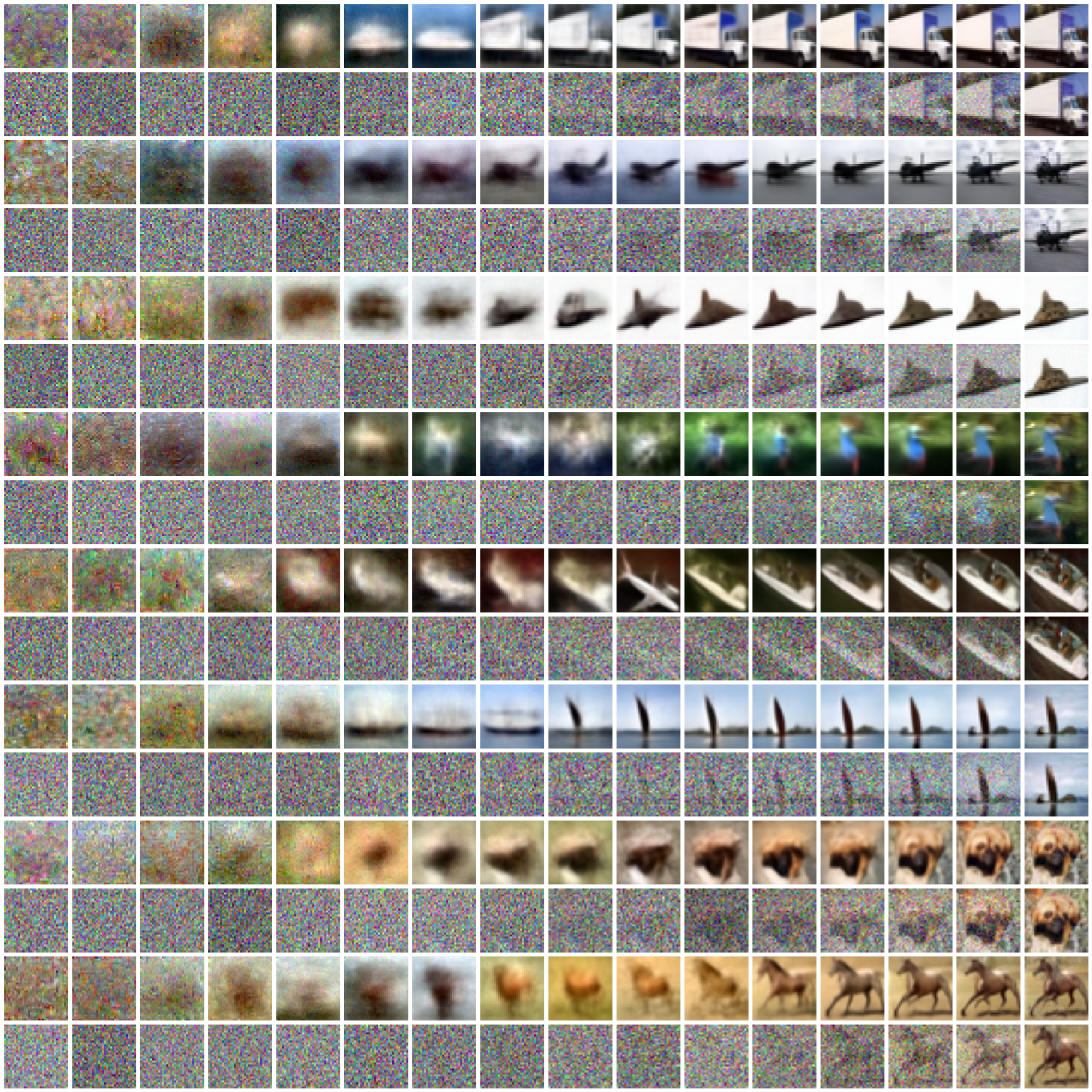}
\includegraphics[width=0.75\textwidth]{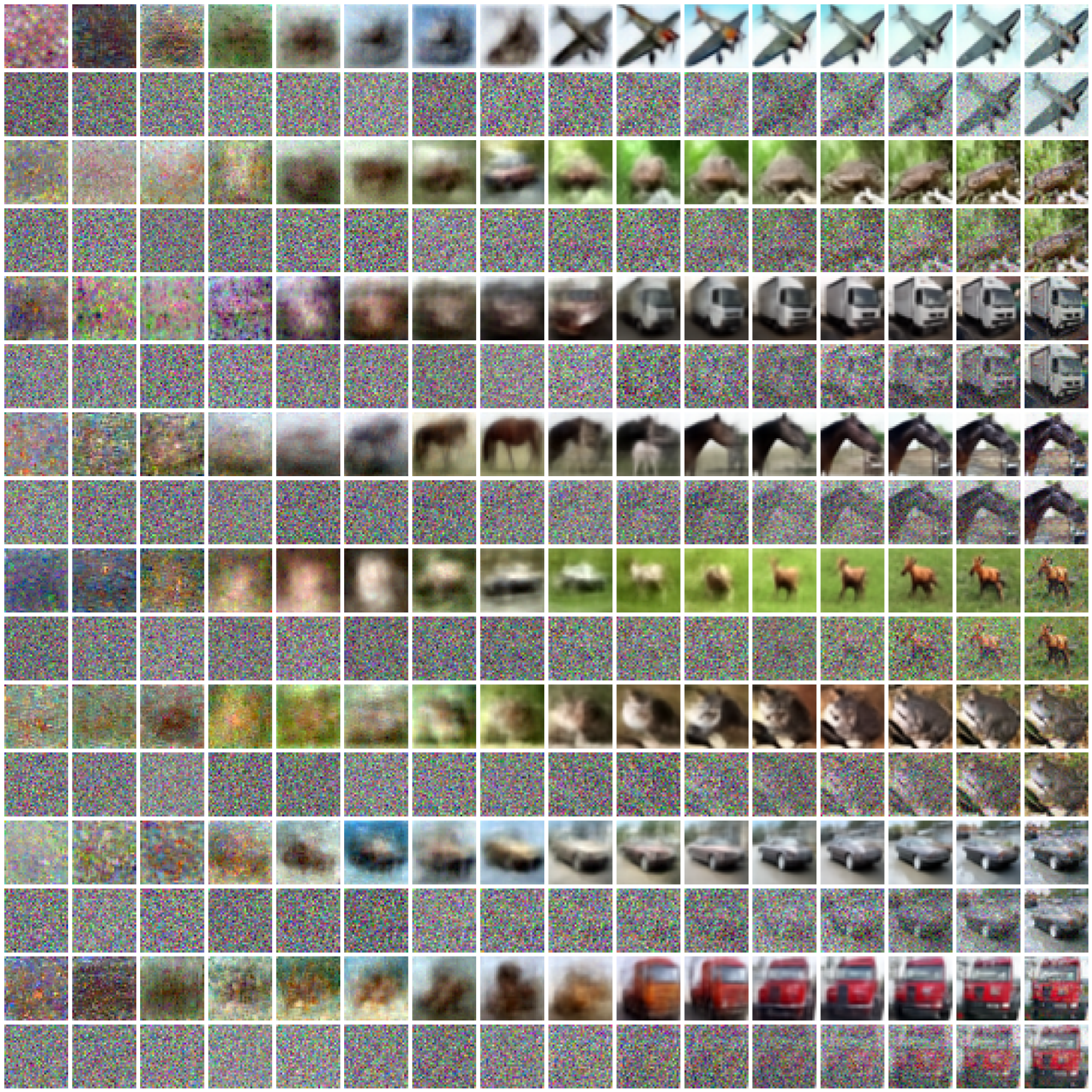}
\caption{Additional samples from the trained VPSDE model of \citet{song2021scorebased}, \(E[X_\tau|X_t,t]\) and \(X_t\) (interleaved rows) over sampling time \(t\), Euler(1000) (top 16 rows) and Euler(100) (bottom 16 rows).}
\end{figure}

\begin{figure}[h]
\centering
\includegraphics[width=0.75\textwidth]{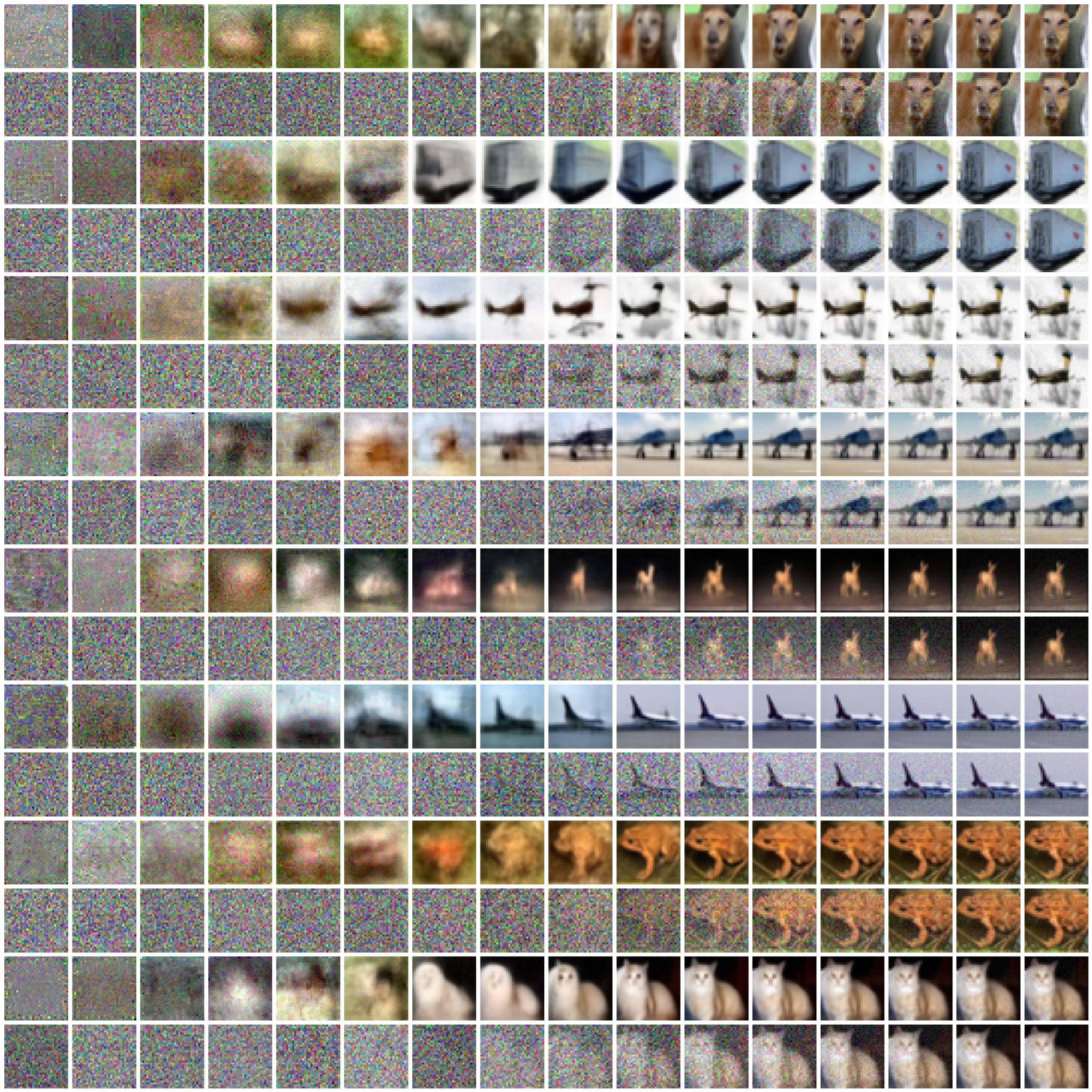}
\includegraphics[width=0.75\textwidth]{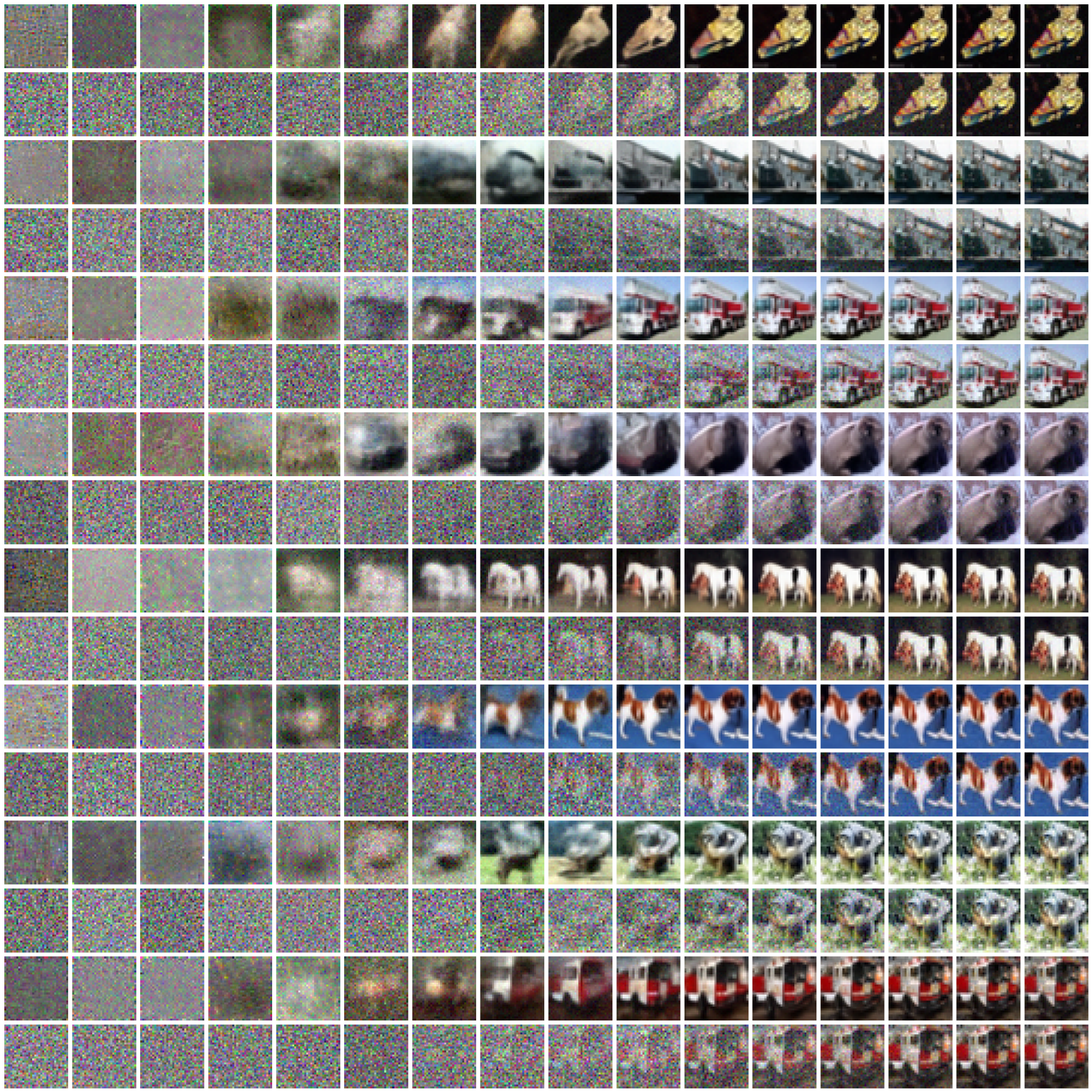}
\caption{Additional samples from the trained VESDE model of \citet{song2021scorebased}, \(E[X_\tau|X_t,t]\) and \(X_t\) (interleaved rows) over sampling time \(t\), Euler(1000) (top 16 rows) and Euler(100) (bottom 16 rows).}
\end{figure}

\end{document}